\documentclass{article}

\def\fighome{figures}

% if you need to pass options to natbib, use, e.g.:
%     \PassOptionsToPackage{numbers, compress}{natbib}
% before loading neurips_2022

% ready for submission
\usepackage[final]{neurips_2022}
% Useful packages
\usepackage{mathrsfs}
\usepackage{graphicx}
\usepackage[colorlinks=true, allcolors=blue]{hyperref}
\usepackage{algorithm}
\usepackage{algorithmic}
\usepackage{xspace}
 %Use Input in the format of Algorithm
 %UseOutput in the format of Algorithm

\usepackage[utf8]{inputenc} % allow utf-8 input
\usepackage[T1]{fontenc}    % use 8-bit T1 fonts
\usepackage{url}            % simple URL typesetting
\usepackage{booktabs}       % professional-quality tables
\usepackage{amsfonts}       % blackboard math symbols
\usepackage{nicefrac}       % compact symbols for 1/2, etc.
\usepackage{microtype}      % microtypography
\usepackage{xcolor}         % colors
\usepackage{amsmath, amssymb, mathtools}
\usepackage{amsthm}
\usepackage{color}
\usepackage{multirow}
\usepackage{wrapfig}
\usepackage{subcaption}

\usepackage{cleveref}
% macros
\newcommand{\rv}[1]{\mathbf{#1}}
\newcommand{\p}{^\prime}
\newcommand{\set}[1]{{\mathcal{#1}}}
\newcommand{\rvx}{\rv{X}}
\newcommand{\rvxa}{\rv{X}^\prime}
\newcommand{\rvy}{\rv{Y}}
\newcommand{\rva}{\rv{A}}
\newcommand{\nuisance}{\rv{N}}
\newcommand{\rep}{\rv{Z}}
\newcommand{\augfn}{g}

\newcommand{\encfn}{E}

\newcommand{\indep}{\perp \!\!\! \perp}

\newtheorem{theorem}{Theorem}[section]

\newtheorem{definition}{Definition}[theorem]
\newtheorem{lemma}[theorem]{Lemma}
\newtheorem{assumption}[theorem]{Assumption}

\newcommand{\oursfull}{{Label-Preserving Adversarial Auto-Augment}\xspace}
\newcommand{\ours}{\textsf{LP-A3}\xspace}

% to compile a preprint version, e.g., for submission to arXiv, add add the
% [preprint] option:
%     \usepackage[preprint]{neurips_2022}

% to compile a camera-ready version, add the [final] option, e.g.:
%     \usepackage[final]{neurips_2022}

% to avoid loading the natbib package, add option nonatbib:

\title{Adversarial Auto-Augment with Label Preservation: \mbox{A Representation Learning Principle Guided Approach}}

% The \author macro works with any number of authors. There are two commands
% used to separate the names and addresses of multiple authors: \And and \AND.
%
% Using \And between authors leaves it to LaTeX to determine where to break the
% lines. Using \AND forces a line break at that point. So, if LaTeX puts 3 of 4
% authors names on the first line, and the last on the second line, try using
% \AND instead of \And before the third author name.

\author{%
  Kaiwen Yang\textsuperscript{1} \quad Yanchao Sun\textsuperscript{2}\quad Jiahao Su\textsuperscript{2}\quad Fengxiang He\textsuperscript{3} \\ \textbf{Xinmei Tian}\textsuperscript{1}\quad \textbf{Furong Huang}\textsuperscript{2}\quad \textbf{Tianyi Zhou}\textsuperscript{2}\quad \textbf{Dacheng Tao}\textsuperscript{3,4}\\
University of Science and Technology of China\textsuperscript{1}; University of Maryland, College Park\textsuperscript{2}\\ JD Explore Academy\textsuperscript{3}; The University of Sydney\textsuperscript{4} \\
\texttt{kwyang@mail.ustc.edu.cn}, \texttt{xinmei@ustc.edu.cn}, \\
\texttt{\{ycs, jiahaosu, furongh\}@umd.edu}\\ 
\texttt{\{fengxiang.f.he, tianyi.david.zhou, dacheng.tao\}@gmail.com}
}

\begin{document}
\maketitle

\begin{abstract}
Data augmentation is a critical contributing factor to the success of deep learning but heavily relies on prior domain knowledge which is not always available. Recent works on automatic data augmentation learn a policy to form a sequence of augmentation operations, which are still pre-defined and restricted to limited options. 
In this paper, we show that a prior-free autonomous data augmentation's objective can be derived from a representation learning principle that aims to preserve the minimum sufficient information of the labels. Given an example, the objective aims at creating a distant ``hard positive example'' as the augmentation, while still preserving the original label.
We then propose a practical surrogate to the objective that can be optimized efficiently and integrated seamlessly into existing methods for a broad class of machine learning tasks, e.g., supervised, semi-supervised, and noisy-label learning. 
Unlike previous works, our method does not require training an extra generative model but instead leverages the intermediate layer representations of the end-task model for generating data augmentations. 
% which can be non-trivial and costly. Instead, we approximate the derived objective with 
In experiments, we show that our method consistently brings non-trivial improvements to the three aforementioned learning tasks from both efficiency and final performance, either or not combined with strong pre-defined augmentations, e.g., on medical images when domain knowledge is unavailable and the existing augmentation techniques perform poorly. Code is available at: \href{https://github.com/kai-wen-yang/LPA3}{https://github.com/kai-wen-yang/LPA3}.
\end{abstract}

% In this paper, guided by a $\epsilon$-minimal sufficient representation learning principle we derive, we propose a fully autonomous prior-free data augmentation technique based on a label-preserving adversarial perturbation. 
% This label-preserving adversarial perturbation essentially creates ``hard positive examples'' without requiring a pre-defined pool of augmentation types. 
% With our proposed label-preserving adversarial augmentation, we obtain state-of-the-art results across various settings, such as supervised, semi-supervised, and noisy-label classification.
% Our adversarial auto-augment, when combined with strong existing augmentation techniques, further improve the state-of-art performance for natural images.
% More importantly, our adversarial auto-augment outperform rand-augment, one of the strongest image augmentation baseline, for medical images. 
\section{Introduction}\label{sec:intro}
Data augmentation has emerged as an effective data pre-processing or data transformation step to mitigate overfitting~\cite{Perez2017TheEO}, to encourage local smoothness~\cite{mixup}, and to improve generalization~\cite{Balestriero2022DAclass} in machine learning pipelines such as deep neural networks.
% Data augmentation has merged as an important pre-processing step been playing an important role in a variety of machine learning applications as an effective regularization to mitigate overfitting, encourage local smoothness, and improve generalization of neural networks \fhc{cite}. 
Notably, effective data augmentation, which incorporates class-related data invariance and enriches the in-class sample, is one of the key contributing factors for representation learning with weak or self supervision~\cite{chen2020simple, FixMatch}.
% Moreover, representation learning with weak- or self- supervision substantially rely on data augmentations to provide informative feedback \fhc{cite}.

Given a task, we aim to generate ``good'' augmentations efficiently. As part of the machine learning model pipeline, an \textit{autonomous} \textit{domain-agnostic} but \textit{task-informed} data augmentation mechanism is desirable.
However, a number of challenges exist. 
\textbf{(1)} Existing augmentation operators are usually hand-crafted based on domain expert knowledge, which is not always available in some domain~\cite{Yang2021MedMNIST}.
For example, widely used augmentations for natural images are not effective on medical images. 
Moreover, the performance of those machine learning pipelines drastically varies with different choices of data augmentations. 
\textbf{(2)} Existing few autonomous augmentation approaches developed lately are neither fully autonomous nor universally applicable to varying domains. 
Although a few autonomous data augmentation approaches have been developed in recent years~\cite{AutoAugment,RandAugment}, they train policies to produce a sequence of pre-defined augmentation operations and thus are not fully automated and are limited to a few domains.
\textbf{(3)} Existing augmentations usually do not fully utilize the task feedback (i.e., task-agnostic) and may be sub-optimal for the targeted task.
A class of automated data augmentation methods train an extra data generative model to generate new augmentations from scratch given a real-world example~\cite{Antoniou2017DAGAN}. 
% Their training objectives can be motivated by some information-theoretical principles. 
However, they require training a generative model, which is a non-trivial task in practice that may either relies on strong prior knowledge or a substantial increased number of training examples.

%What are good augmentations for a given task and how to generate them efficiently?
In this paper, we first investigate the conditions required to generate domain-agnostic but task-informed data augmentations. 
Consider a representation learning pipeline, we started from a probabilistic graphical model that describes the relations among the label $\rvy$, the nuisance $\nuisance$, the example $\rvx$, 
% generated based on $Y$ and $N$, 
its augmentation $\rvxa$, and the latent representations $\rep$.
% and $Z'$ from $X$ and $X'$ respectively \fhc{shorten this sentence}. 
%Motivated by an \textsf{InfoMin} principle in information theory, 
We argue that a minimum-sufficient representation for the task preserves the label information but excludes other distractive information from the nuisance. 
We then investigate the conditions for an augmentation $\rvxa$ that results in learning such preferred representations. 
These conditions motivate an optimization objective that can be used to produce automated domain-agnostic but task-informed data augmentations for each example, without replying on pre-defined augmentation operators or specific domain knowledge. 
Consequently, our proposed optimization objective addresses all aforementioned challenges. 
% the variables generating a real example and the learned representations of the example and its augmentations.  

For practicality, we further propose a surrogate of the derived objective that can be efficiently computed from the intermediate-layer representations of the model-in-training.
The surrogate is built upon the data likelihood estimation through perceptual distance~\cite{laidlaw2020perceptual} defined on the intermediate layers' representations. 
%, which measures the difference between the model predictions for the original example $X$ and its augmentation $X'$. 
Specifically, our proposed surrogate objective maximizes the perceptual distance between $\rvx$ and $\rvxa$, under a label preserving constraint on the model prediction of $\rvxa$. 
This problem can be efficiently solved by optimizing its Lagaragian relaxation. Thereby, given $\rvx$ and its label $\rvy$, the solution to our surrogate objective generates ``hard positive examples'' for $\rvx$   without loosing its label information. Once generated, $\rvxa$ is used to train the model towards producing the minimum-sufficient representation $\rep$ for the targeted task. 
Our proposed method, named  \textit{\oursfull (\ours)}, does not require any extra generative models such as Generative Adversarial Networks, unlike previous automated augmentation methods~\cite{tian2020makes}.
% Unlike previous automated augmentation methods \fhc{cite}, we do not require any extra \fh{generative} models such as Generative Adversarial Networks. 
We further propose a sharpness-aware criterion selecting only the most informative examples to apply our auto-augmentation on so it does not cause expensive extra computation. 

Our proposed \ours is a general and autonomous data augmentation technique % not limited to only representation learning but 
applicable to a variety of machine learning tasks, such as supervised, semi-supervised and noisy-label learning. 
Moreover, we demonstrate that it can be seamlessly integrated with existing algorithms for these tasks and consistently improve their performance. 
% For learning tasks with unlabeled or wrongly labeled data, \textcolor{red}{the labels $Y$ are usually determined by existing algorithms according to the model confidence or prediction and they carefully select training data to ensure that the labels are mostly correct}. 
In experiments on the three learning tasks, we equip \ours with existing methods and obtain significant improvement on both the learning efficiency and the final performance. The generated augmentations are optimized for the model-in-training in a target-task-aware manner and thus notably accelerate the slow convergence in computationally intensive tasks such as semi-supervised learning. It is worth noting that our augmentation can consistently bring improvement to tasks without domain knowledge or strong pre-defined augmentations such as medical image classification, on which previous image augmentations lead to performance degeneration.

\section{Related work}
\label{sec:related}
\textbf{Hand-crafted vs. Autonomous Data Augmentations.}
Most of the existing widely used data augmentations are hand-crafted based on domain expert knowledge~\cite{oord2018representation,tian2020contrastive,misra2020self,chen2020simple,cubuk2020randaugment}. For example, MoCo~\cite{he2020momentum} and InstDis~\cite{wu2018unsupervised} create augmentations by applying a stochastic but pre-defined data augmentation function to the input. CMC~\cite{tian2020contrastive} splits images across color channels. PIRL~\cite{misra2020self} generates data augmentations through random JigSaw Shuffling. CPC~\cite{oord2018representation} renders strong data augmentations by utilizing RandAugment~\cite{RandAugment}, which learns a policy producing a sequence of pre-defined augmentation operations selected from a pool~\cite{AutoAugment}. AdvAA~\cite{zhang2019adversarial} designs a adversarial objective to learn the augmentation policy.  \cite{RandAugment,AutoAugment,zhang2019adversarial} are all based on pre-defined operations which is not available in certain domains, and their objective  cannot guarantee the label-preserving of the generated data which may lead to suboptimal performance.
% \vspace{0.5em}
% \textbf{Autonomous Data Augmentation.} 
% {\fhc{add in other data augmentations that learns a policy to select from a pool of pre-defined augmentation operations.}} \fhc{Learnable data augmentation?}
``InfoMin'' principle of data augmentation is proposed~\cite{tian2020makes} to minimize the mutual information between different views (equivalent to $\min I(\rvx, \rvx\p)$). However, their theory depends on access to a minimal sufficient encoder which may be difficult to obtain. In contrast, we not only consider how to generate optimal views or augmentations, but also consider generating the minimal sufficient representation.
% by proposing an objective for optimization.
The algorithm~\cite{tian2020makes} deploys a generator to render augmentation (which may be costly to train especially on non-natural image domains), while we directly learn the augmentation through gradient descent w.r.t. the input.

% \vspace{0.5em}
\textbf{Information Theory for Representation Learning.}
Information theory is introduced in deep learning to measure the quality of representations~\cite{tishby2015deep,achille2018emergence}. The key idea is to use information bottleneck methods~\cite{tishby2000information,tishby2015deep} to encourage the learned representation being minimal sufficient. 
Mutual information objectives are commonly used in self-supervised learning. For example, InfoMax principle~\cite{linsker1988self} used by many works aims to maximize the mutual information between the representation and the input~\cite{tian2020contrastive,bachman2019learning,wu2020importance}. But simply maximizing the mutual information does not always lead to a better representation in practice~\cite{tschannen2019mutual}. In contrast, InfoMin principle~\cite{tian2020makes} minimizes the mutual information between different views. Both InfoMax and the InfoMin principles can be associated with our proposed representation learning criteria in Section~\ref{sec:theory}, as they lead to sufficiency and minimality of the learned representation, respectively.

% \vspace{0.5em}
\textbf{Augmentation in Self-supervised Contrastive Learning.}
Self-supervised Contrastive Representation Learning~\cite{oord2018representation,hjelm2018learning,wu2018unsupervised,tian2020contrastive,sohn2016improved,chen2020simple} learn representation through optimization of a contrastive loss which pulls similar pairs of examples closer while pushing dissimilar example pairs apart.
Creating multiple views of each example is crucial for the success of self-supervised contrastive learning. However, most of the data augmentation methods used in generating views, although sophisticated, are hand-crafted or not learning-based. 
Some use luminance and chrominance decomposition~\cite{tian2020contrastive}, while others use random augmentation from a pool of augmentation operators~\cite{wu2018unsupervised,chen2020simple,bachman2019learning,he2020momentum,ye2019unsupervised,srinivas2020curl,zhao2021distilling,zhuang2019local}. Recently, adversarial perturbation based augmentation has been proposed to generate more challenging positives/negatives for contrastive learning~\cite{yang2022identity, ho2020contrastive}.

% \vspace{0.5em}
\textbf{Augmentation in Semi-supervised Learning.}
Data augmentation plays an important role in semi-supervised learning, e.g., (1) consistency regularization~\cite{Consistency,PiModel} enforces the model to produce similar outputs for a sample and its augmentations; (2) pseudo labeling~\cite{PseudoLabel} trains a model using confident predictions produced by itself~\cite{MeanTeacher} for unlabeled data. Data augmentations are critical~\cite{FixMatch, ReMixMatch} because they determine both the output targets and input signals: (1) accurate pseudo labels are achieved by averaging the predictions over multiple augmentations; (2) weak augmentations (e.g., flip-and-shift) are important to produce confident pseudo labels, while strong augmentations~\cite{AutoAugment, RandAugment}) are used to train the model and expand the confidence regions (so more confident pseudo labels can be collected later). Data selection~\cite{FlexMatch, zhou2020time} for high-quality pseudo labels is also critical and its criterion is estimated on augmentations, e.g., the confidence~\cite{MixMatch} or time-consistency~\cite{zhou2020time} of each sample.

% \vspace{0.5em}
\textbf{Augmentation in Noisy-label Learning}
Two primary challenges in noisy-label learning is clean label detection~\cite{Liu2016ClassificationWN, han2018co, jiang2018mentornet} and noisy label correction by pseudo labels~\cite{reed2014training,arazo2019unsupervised, Li2020DivideMix}. Both significantly depend on the choices of data augmentations since the former usually relies on confidence thresholding and augmentations help rule out the overconfident samples, while the latter relies on the quality of semi-supervised learning. Moreover, as shown in previous works~\cite{Li2020DivideMix, zhou2021robust}, removing strong augmentations such as RandAugment can considerably degenerate the noisy label learning performance.\looseness-1
\section{Preliminaries}
\label{sec:prelim}
\textbf{Basics of Information Theory}
Our analyses make frequent use of information theoretical quantities~\cite{cover1991information}.
% entropy
Given a joint distribution $P_{\rv{X},\rv{Y}}$ and its marginal distributions $P_{\rv{X}}$, $P_{\rv{Y}}$, we define their {\em entropy} as $H(\rv{X}, \rv{Y}) = \mathrm{E}_{\rv{X},\rv{Y}}[-\log P(x, y)]$, $H(\rv{X}) = \mathrm{E}_{\rv{X}}[-\log P(x)]$, and $H(\rv{Y}) = \mathrm{E}_{\rv{Y}}[-\log P(y)]$.
% conditional entropy
Furthermore, we define the {conditional entropy} of $\rv{X}$ given $\rv{Y}$ as $H(\rv{X}|\rv{Y}) = \mathrm{E}_{\rv{Y}}[-\log H(\rv{X}|y)] = H(\rv{X}, \rv{Y}) - H(\rv{Y})$. % mutual information
Finally, we define the mutual information between $\rv{X}$ and $\rv{Y}$ as $I(\rv{X} \wedge \rv{Y}) = H(\rv{X}) - H(\rv{X}|\rv{Y}) = H(\rv{X}) + H(\rv{Y}) - H(\rv{X}, \rv{Y})$.

\textbf{Notations and Problem Setup.}
% notations
In this paper, we use bold capital letters (e.g., $\rv{X}, \rv{Y}$) to denote random variables, lowercase letters (e.g., $x, y$) to denote their realizations, and curly capital letters (e.g., $\set{X}, \set{Y}$) to denote the corresponding sample spaces.

% random variables: observation and label
Since we mainly consider supervised and semi-supervised problems, we define 
Let $P_{\rvx, \rvy}$ be the joint distribution of data \textit{observation} $\rvx$ and \textit{label} $\rv{Y}$, where $\rv{X}$ is a random vector taking values on a finite observation space $\set{X}$ (e.g., images) and $\rvy$ is a discrete random variable taking values on the label space $\set{Y}$ (e.g., classes). Our goal is to learn a classifier to predict $y \in \set{Y}$ from an observation $x \in \set{X}$.

%\vspace{0.5em}
\textbf{Task-nuisance Decomposition.}
% random variable: nuisance
To advance the analysis, we decouple the randomness in $\rv{X}$ into two parts, one pertaining to the label and another independent to the label.
Concretely, we define a random variable \textit{nuisance} $\rv{N}$ such that {\bf1)} the nuisance $\rv{N}$ is independent to the label $\rv{Y}$, i.e., $\rv{N} \indep \rvy$; and 
{\bf2)} the observation $\rv{X}$ is a deterministic function of the nuisance $\rv{N}$ and the label $\rv{Y}$, i.e., $\rv{X} = g(\rv{Y}, \rv{N})$ for some $g$. \Cref{lem:task-nuisance} demonstrates that such a random variable always exists.

% task-nuisance decomposition
\begin{lemma}
[Task-nuisance Decomposition~\cite{achille2018emergence}]
\label{lem:task-nuisance}
Given a joint distribution $P_{\rvx,\rvy}$, where $\rvy$ is a discrete random variable, we can always find a random variable $\nuisance$ independent of $\rvy$ 
such that $\rvx=d(\rvy,\nuisance)$, for some deterministic function $d$.
\end{lemma}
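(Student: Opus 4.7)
The plan is to construct $\nuisance$ explicitly via a quantile-style / inverse-CDF representation, exploiting the fact that $\rvy$ is discrete so that conditioning on each value is unambiguous. First I would (possibly on an enlargement of the underlying probability space) take $\nuisance$ to be uniformly distributed on $[0,1]$ and independent of $\rvy$. For each $y$ in the countable support of $\rvy$, the conditional law $P_{\rvx\mid\rvy=y}$ is a well-defined Borel probability measure on $\set{X}$. Assuming $\set{X}$ is standard Borel (the practical setting, covering $\mathbb{R}^d$, image spaces, etc.), a classical theorem guarantees a measurable map $d(y,\cdot)\colon [0,1]\to\set{X}$ whose pushforward of Lebesgue measure equals $P_{\rvx\mid\rvy=y}$; on $\set{X}=\mathbb{R}$ this is the generalized inverse CDF, and on general standard Borel spaces it follows by Borel isomorphism with $[0,1]$.

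Next I would assemble these fibre-wise maps into a single deterministic function $d\colon\set{Y}\times[0,1]\to\set{X}$ by taking the above $d(y,\cdot)$ for each $y$ in the support of $\rvy$ and any fixed measurable map for $y$ outside the support (the value is irrelevant to the joint law). Because $\set{Y}$ is discrete, joint measurability of $d$ in $(y,n)$ follows immediately from the measurability of each slice, since the construction is a countable case-split. The verification is then mechanical: $\nuisance$ and $\rvy$ are independent by construction, and for every measurable $A\subseteq\set{X}$ and every $y$ with $P(\rvy=y)>0$,
\[
P\bigl(d(\rvy,\nuisance)\in A,\,\rvy=y\bigr)=P(\rvy=y)\cdot P\bigl(d(y,\nuisance)\in A\bigr)=P(\rvy=y)\cdot P_{\rvx\mid\rvy=y}(A),
\]
which coincides with $P(\rvx\in A,\,\rvy=y)$. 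Hence $(d(\rvy,\nuisance),\rvy)$ has the same joint law as $(\rvx,\rvy)$, and by working on a suitable enlargement of the probability space we can identify $\rvx$ with $d(\rvy,\nuisance)$ almost surely, as required.

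The main obstacle I anticipate is the measure-theoretic step of producing, jointly measurably in $(y,n)$, a representation of each conditional $P_{\rvx\mid\rvy=y}$ as a pushforward of a single uniform variable on $[0,1]$. Discreteness of $\rvy$ trivializes the assembly across $y$, so the substantive content lies in the standard Borel assumption on $\set{X}$; without it, one would need to invoke stronger disintegration or regular conditional probability theorems. Everything else — independence of $\nuisance$ from $\rvy$, and matching the joint law — is then a direct computation, so no genuine difficulty beyond the fibre-wise representation step is expected.
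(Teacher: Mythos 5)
Your construction is correct and is essentially the standard argument: the paper itself does not reprove \Cref{lem:task-nuisance} but imports it from Achille and Soatto~\cite{achille2018emergence}, whose proof is exactly the inverse-CDF/quantile (noise-outsourcing) construction you describe, with $\nuisance$ uniform on $[0,1]$ independent of $\rvy$ and $d(y,\cdot)$ pushing Lebesgue measure forward to $P_{\rvx\mid\rvy=y}$ for each of the countably many $y$. The only point worth flagging is the final step of upgrading equality in joint law to an almost-sure identity $\rvx=d(\rvy,\nuisance)$ on an enlarged space, which you correctly note requires a standard-Borel/noise-outsourcing argument rather than just matching distributions; this matches the level of rigor of the cited source.
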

% \vspace{0.5em}
\textit{Remarks.} 
We can rewrite the conditions of task-nuisance decomposition in terms of information theory.
{\bf1)} Since the nuisance $\rv{N}$ is independent to the label $\rv{Y}$, we have $I(\rv{Y} \wedge \rv{N}) = 0$; and
{\bf2)} Since the nuisance $\rv{N}$ and the label $\rv{Y}$ determines the observation $\rv{X}$, we have $H(\rv{X}|\rv{Y}, \rv{N}) = 0$.
\section{Principles of Representation Learning: Theoretical Interpretation}
\label{sec:theory}

\subsection{What Is A Good Representation?}
\label{sec:theory_representation}

In real-world applications, the observation $\rv{X}$ is usually complex in a high-dimensional space $\set{X}$, making it hard to directly learn a good classifier for $\rv{Y}$. To remedy this curse of dimensionality, it is important to learn a good representation of $\rv{X}$, i.e., learn an encoder $\encfn(\cdot)$ that maps the high-dimensional observation $\rv{X}$ into a low-dimensional representation $\rv{Z}$. We illustrate the process of data generation and representation learning by a probabilistic graphical model as shown in \Cref{sfig:graph_noaug}.

An ideal encoder should keep the important information from $\rvx$ (e.g. label-relevant information) and maximally discard the noise or nuisance of $\rvx$, such that it is much easier to learn a classifier from $\rep$ than from $\rvx$.
Based on the above intuition, we define an $\epsilon$-optimal representation of $\rvx$, which has sufficient information for classifying w.r.t. $\rvy$, while remaining little information of the nuisance.

% to have the minimal information of $\rvx$ while sufficiently capturing the information of $\rvy$.

% \begin{definition}[Minimal Sufficient Representation (Optimal Representation)]
% \label{def:min_suf}
% For a Markov chain $\rvy \to \rvx \to \rep$, we say that a representation $\rep$ of $\rvx$ is sufficient for $\rvy$ if $I( \rep \wedge \rvy ) = I( \rvx \wedge \rvy )$, and $\rep$ is minimal if $I( \rep \wedge \rvx ) \leq I( \tilde{\rep} \wedge \rvx )$ for all sufficient $\tilde{\rep}$'s. 
% \end{definition}

% \Cref{def:min_suf} define an optimal representation as the minimal statistics of $\rvx$ w.r.t. $\rvy$, which is also proposed by many prior works~\cite{tishby2015deep,achille2018emergence}. 
% Sufficiency of the representation is necessary for learning an accurate classifier based on $\rep$, while minimality of the representation makes the learning of such a classifier less complicated. However, achieving exact minimality is challenging since $\rvx$ usually contains much more information than label $\rvy$. Therefore, we propose a ``softer'' version of \Cref{def:min_suf} as below, which still requires sufficiency but allows the representation to remain some label-irrelevant information.

\begin{definition}
[$\epsilon$-Minimal Sufficient Representation ($\epsilon$-Optimal Representation)]
\label{def:min_suf_eps}
For a Markov chain $\rvy \to \rvx \to \rep$, 
we say thata representation $\rep$ of $\rvx$ is sufficient for $\rvy$ if $I( \rep \wedge \rvy ) = I( \rvx \wedge \rvy )$, and $\rep$ is
$\epsilon$-minimal sufficient for $\rvy$ if $\rep$ is sufficient and $I( \rep \wedge \rvx ) \leq I( \tilde{\rep} \wedge \rvx ) + \epsilon$ for all $\tilde{\rep}$ satisfying $I( \tilde{\rep} \wedge \rvy ) = I( \rvx \wedge \rvy )$.
\end{definition}

\textsl{Remark.} 
Due to the property of mutual information, we have $0 \leq \epsilon \leq H(\rvx)$.
The lower $\epsilon$ is, the more ``minimal'' the representation is. When $\epsilon=0$, the representation is minimal sufficient, which is a desirable property as characterized by many prior works~\cite{tishby2015deep,achille2018emergence}.
% $\epsilon=H(\rvx)$ iff $\rep$ does not discard any information of $\rvx$, which is not desirable in representation learning.

\Cref{def:min_suf_eps} characterizes how good a sufficient representation is, based on how much redundant information is remained. Recall that $\rvx$ comes from a deterministic function of label $\rvy$ and nuisance $\nuisance$. The redundancy of $\rep$ can also be measured by the mutual information between $\rep$ and $\nuisance$. Achille et al.~\cite{achille2018emergence} prove that if a representation $\rep$ is sufficient and is invariant to nuisance $\nuisance$, i.e., $I(\rep\wedge\nuisance)=0$, then $\rep$ is also minimal. However, since $\nuisance$ is not known, it is hard to directly encourage the representation to be invariant to $\nuisance$.
% \citet{achille2018emergence} then show that adding noise during the training can bias the network towards invariant representations.

Can we learn an $\epsilon$-minimal sufficient representation in a principled way? Inspired by the recent success of data augmentation techniques in self-supervised learning and semi-supervised learning, we find that data augmentation can implicitly encourage the representation to be invariant to the nuisance $\nuisance$. However, most augmentation methods are driven by pre-defined transformations, which do not necessarily render a minimal sufficient representation. In the next section, we will analyze the effects of data augmentation in representation learning in details.

% Probabilistic graphical models of representation
\begin{figure}[!tbp]
\vspace{-1em}
\centering
    \begin{subfigure}[b]{0.39\textwidth}
         \centering
         \includegraphics[width=0.75\textwidth]
         {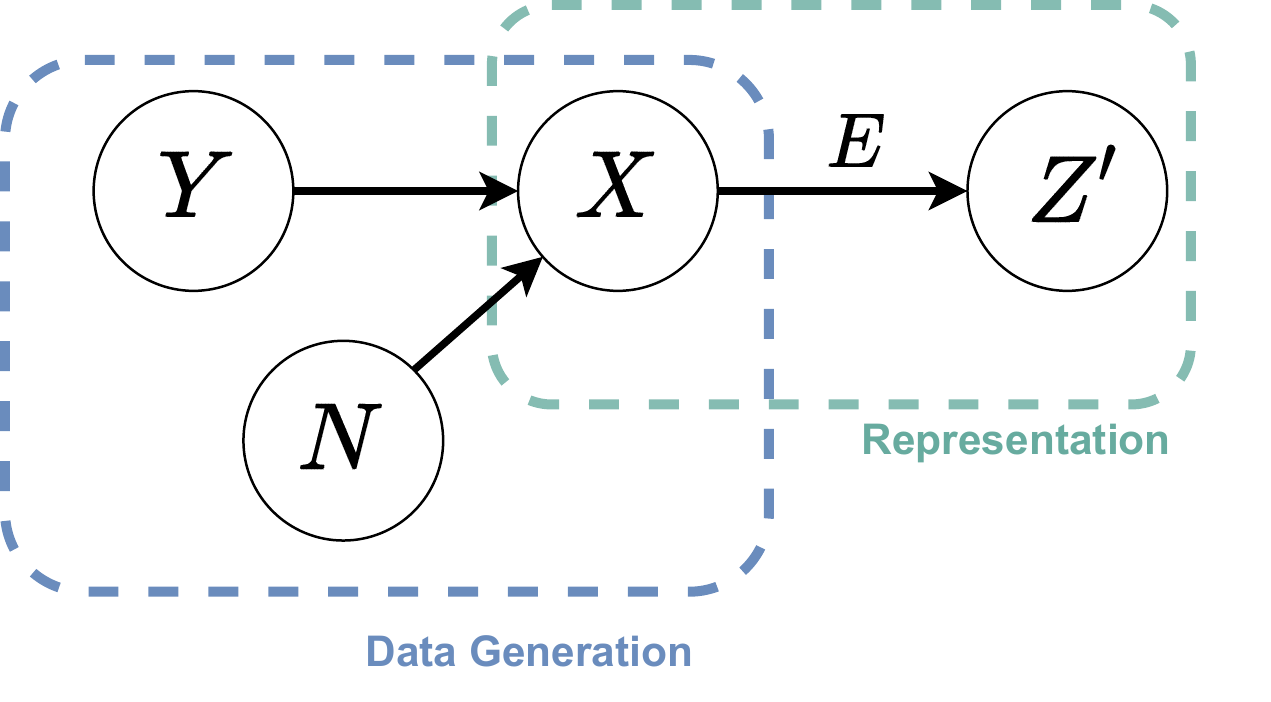}
         \vspace{-0.5em}
         \caption{Without Augmentation}
         \label{sfig:graph_noaug}
     \end{subfigure}
     \hfill
     \begin{subfigure}[b]{0.59\textwidth}
         \centering
         \includegraphics[width=0.75\textwidth]
         {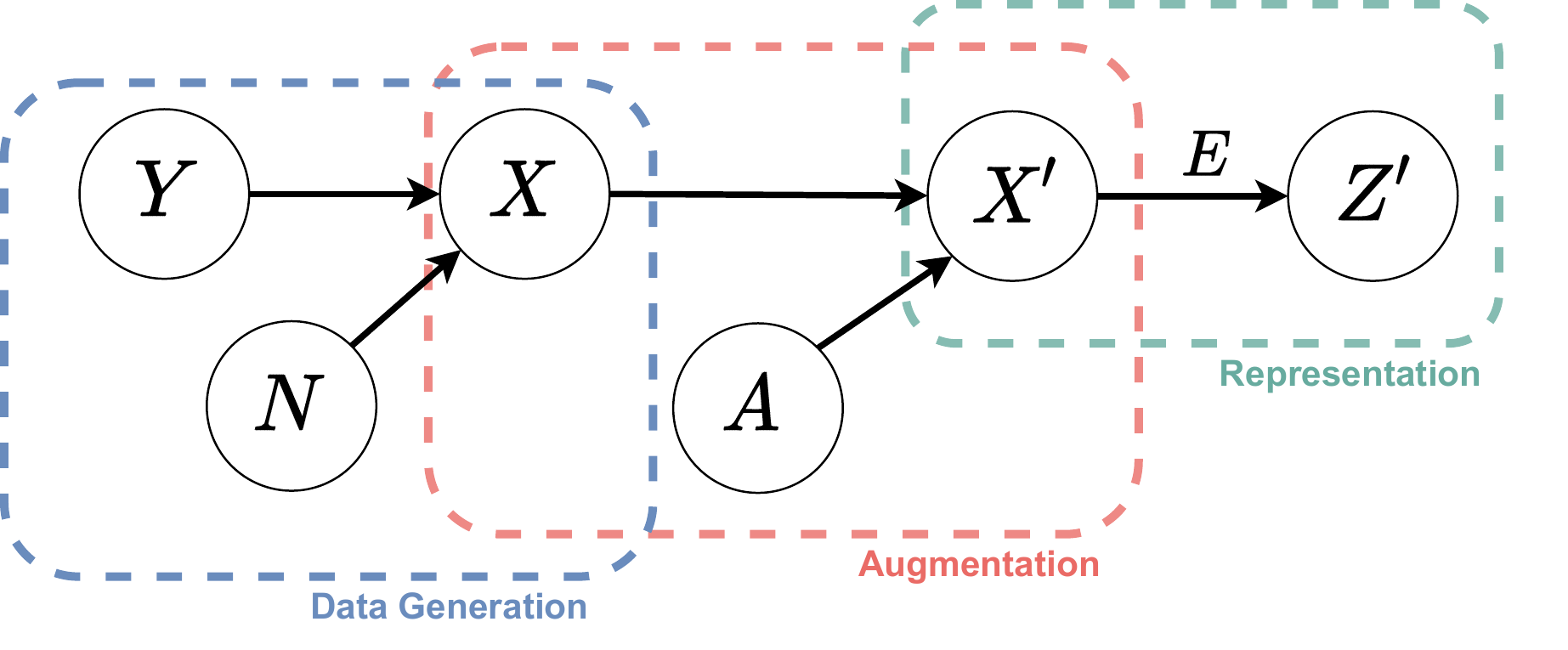}
         \vspace{-0.5em}
         \caption{With Augmentation}
         \label{sfig:graph_aug}
     \end{subfigure}
  \caption{{\bf Probabilistic graphical models} of representation learning.}
  \label{fig:markov_graph}
\vspace{-1em}
\end{figure}

\subsection{Proper Data Augmentation Leads to (Near-)Optimal Representation}
\label{sec:theory_augmentation}

In this section, we investigate the role of data augmentation for learning good representations.
% Specifically, a good data augmentation leads to a (near-)optimal data representation defined in Definition~\ref{def:min_suf_eps}.
% We find that the quality of data augmentation is directly related to the quality of learned representation, and a (near-)perfect data augmentation leads to a (near-)optimal data representation that is defined in \Cref{def:min_suf,def:min_suf_eps}.
We first make the following mild assumption on the underlying relationship between $\rvx$ and $\rvy$.

\begin{assumption}
\label{assump:determinstic}
    There exists a deterministic function $\pi: \set{X} \to \set{Y}$, i.e., $H(\rvy|\rvx)=0$.
\end{assumption}

Assumption~\ref{assump:determinstic} requires that there exists a ``perfect classifier'' that identifies the label $y$ of the observation $x$ with no error, which is common in practice. Note that for data with ambiguity, a tie breaker can be used to map each observation to a unique label. Therefore, Assumption~\ref{assump:determinstic} is realistic.

Let $\augfn$ be a deterministic augmentation function such that $\rvx^\prime := \augfn(\rvx, \rva)$ is the augmented data, where $\rva$ is a random variable denoting the augmentation selection. 
For example, if $\rvx=x$ is an image sample, $\rva=a$ is the augmentation ``rotate by 90 degree'', then $\rvx^\prime=x^\prime$ is the corresponding rotated image sample. 
We learn an encoder $\encfn(\cdot)$ that maps the augmented data $\rvx\p$ to a representation $\rep\p$.
With this augmentation processes, the graphical model in \Cref{sfig:graph_noaug} is updated to \Cref{sfig:graph_aug}. 

We show in the theorem below that if the augmentation process preserves the information of $\rvy$, $\rep\p$ can be sufficient for $\rvy$. Furthermore, if the augmented data $\rvx\p$ contains no information of the original nuisance $\nuisance$, $\rep\p$ will be invariant to $\nuisance$ and thus will become a minimal sufficient representation.

\begin{theorem}
\label{thm:min_suff_eps}
    Consider label variable $\rvy$, observation variable $\rvx$ and nuisance variable $\nuisance$ satisfying Assumption~\ref{assump:determinstic}.
    Let $\rva$ be the augmentation variable, $\rvx^\prime$ be the augmented data,
    % Consider a Markov field $(\rvy, \nuisance) \to \rvx$, $(\rvx, \rva) \to \rvxa$.
    and $\rep^*$ be the solution to 
    \setlength\abovedisplayskip{-5pt}
    \setlength\belowdisplayskip{-5pt}
    \begin{equation}
    \begin{aligned}
        \label{eq:obj_all}
        &\mathrm{argmax}_{\rep\p} & & I(\rep\p \wedge \rvxa) \text{ or } I(\rep\p \wedge \rvy)\\
        &\text{ subject to } & & I(\rep\p \wedge \rva) = 0.
    \end{aligned}
    \end{equation}
    % \begin{equation}
    % \label{eq:obj}
    %     \mathrm{argmax}_{\rep\p} I(\rep\p \wedge \rvxa) \text{ subject to } I(\rep\p \wedge \rva) = 0,
    % \end{equation}
    % or 
    % \begin{equation}
    % \label{eq:obj_y}
    %     \mathrm{argmax}_{\rep\p} I(\rep\p \wedge \rvy) \text{ subject to } I(\rep\p \wedge \rva) = 0.
    % \end{equation}
    Then, $\rep^*$ is a $\epsilon$-minimal sufficient representation of $\rvx$ for label $\rvy$ if the following conditions hold:\\
    \textbf{Condition (a)}: $I(\rvxa \wedge \rvy) = I(\rvx \wedge \rvy)$ ($\rvxa$ is an in-class augmentation) and \\
    \textbf{Condition (b)}: $I(\rvxa \wedge \nuisance) \leq \epsilon$ ($\rvxa$ does not remain much information about $\nuisance$).
\end{theorem}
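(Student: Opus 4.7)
The plan is to establish the two defining properties of $\epsilon$-minimal sufficiency separately: first sufficiency of $\rep^*$, using Condition (a) together with Assumption~\ref{assump:determinstic}, and then $\epsilon$-minimality, using Condition (b) and an Achille-style information decomposition. Throughout, I will rely on the graphical model of Figure~\ref{sfig:graph_aug}, from which I extract the Markov chain $(\rvy,\nuisance)\to\rvx\to\rvxa\to\rep^*$ together with the fact that the augmentation index $\rva$ is drawn independently of $(\rvy,\nuisance)$.

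For sufficiency, the idea is to exhibit a feasible candidate attaining the data-processing upper bound. Condition (a) combined with $H(\rvy|\rvx)=0$ yields $I(\rvxa\wedge\rvy)=H(\rvy)$, so $H(\rvy|\rvxa)=0$ and $\rvy$ is a deterministic function $\pi\p$ of $\rvxa$. The candidate $\rep\p=\pi\p(\rvxa)$ satisfies the constraint $I(\rep\p\wedge\rva)=I(\rvy\wedge\rva)=0$ (because $\rvy\indep\rva$) and attains $I(\rep\p\wedge\rvy)=H(\rvy)$, which equals the DPI upper bound along $\rvy\to\rvxa\to\rep\p$. So the maximizer $\rep^*$ under the $I(\rep\p\wedge\rvy)$ objective satisfies $I(\rep^*\wedge\rvy)=I(\rvx\wedge\rvy)$. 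For the alternative $I(\rep\p\wedge\rvxa)$ objective, the same conclusion holds: any feasible non-sufficient $\rep\p$ could be concatenated with $\pi\p(\rvxa)$ (still a function of $\rvxa$ and independent of $\rva$) to strictly enlarge the objective, contradicting optimality.

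For near-minimality, data processing on $\nuisance\to\rvxa\to\rep^*$ combined with Condition (b) gives $I(\rep^*\wedge\nuisance)\le I(\rvxa\wedge\nuisance)\le\epsilon$. Using $H(\rvy|\rvx)=0$ and $\rvx=d(\rvy,\nuisance)$, I then decompose
\[
I(\rep^*\wedge\rvx)=I(\rep^*\wedge\rvy)+I(\rep^*\wedge\rvx|\rvy)\le H(\rvy)+I(\rep^*\wedge\nuisance|\rvy),
\]
where the inequality uses $I(\rep^*\wedge\rvx|\rvy)\le I(\rep^*\wedge\rvx,\nuisance|\rvy)=I(\rep^*\wedge\nuisance|\rvy)$ because $\rvx$ is a function of $(\rvy,\nuisance)$. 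Expanding $I(\rep^*\wedge\nuisance,\rvy)$ in two ways and using $\nuisance\indep\rvy$ yields the identity
\[
I(\rep^*\wedge\nuisance|\rvy)=I(\rep^*\wedge\nuisance)+I(\rep^*\wedge\rvy|\nuisance)-I(\rep^*\wedge\rvy).
\]
Sufficiency gives $I(\rep^*\wedge\rvy)=H(\rvy)$, and $I(\rep^*\wedge\rvy|\nuisance)\le H(\rvy|\nuisance)=H(\rvy)$ by $\nuisance\indep\rvy$, so the bracketed correction is non-positive and $I(\rep^*\wedge\nuisance|\rvy)\le\epsilon$. Hence $I(\rep^*\wedge\rvx)\le H(\rvy)+\epsilon$. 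Since every sufficient $\tilde\rep$ obeys $I(\tilde\rep\wedge\rvx)\ge I(\tilde\rep\wedge\rvy)=H(\rvy)$, the required bound $I(\rep^*\wedge\rvx)\le I(\tilde\rep\wedge\rvx)+\epsilon$ follows.

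The main obstacle I anticipate is the conditional-information inequality $I(\rep^*\wedge\nuisance|\rvy)\le I(\rep^*\wedge\nuisance)$ in the minimality step; this fails in general for two independent variables and survives here only because sufficiency caps $I(\rep^*\wedge\rvy|\nuisance)$ by $H(\rvy)=I(\rep^*\wedge\rvy)$, making the interaction-information correction non-positive. A secondary subtlety is the $I(\rep\p\wedge\rvxa)$ variant of the objective, where optimality does not automatically encode $\rvy$ into $\rep^*$; the concatenation-with-$\pi\p(\rvxa)$ argument sketched above is the cleanest route to close that case without assuming more structure on the maximizer.
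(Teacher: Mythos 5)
Your proposal is correct, and for the $\epsilon$-minimality half it takes a genuinely different and, in my view, tighter route than the paper. The paper first proves that the augmented observation $\rvx\p$ itself is $\epsilon$-minimal (via the identity $I(\rvx\p \wedge \rvy\,|\,\nuisance)=I(\rvx\wedge\rvy)$ and a contradiction with a hypothetical $\tilde{\rvx}$), and then passes to $\rep^*$ rather tersely by invoking $I(\rep^*\wedge\nuisance)\leq I(\rvx\p\wedge\nuisance)\leq\epsilon$; the link back to the definition's inequality $I(\rep^*\wedge\rvx)\leq I(\tilde{\rep}\wedge\rvx)+\epsilon$ is left implicit. You instead bound $I(\rep^*\wedge\rvx)$ directly: the chain-rule decomposition $I(\rep^*\wedge\rvx)=I(\rep^*\wedge\rvy)+I(\rep^*\wedge\rvx|\rvy)$, the reduction of the second term to $I(\rep^*\wedge\nuisance|\rvy)$ via $\rvx=d(\rvy,\nuisance)$, and the interaction-information identity showing the conditioning correction is non-positive under sufficiency, give $I(\rep^*\wedge\rvx)\leq H(\rvy)+\epsilon$, which you match against the DPI lower bound $I(\tilde{\rep}\wedge\rvx)\geq H(\rvy)$ for every sufficient $\tilde{\rep}$. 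This supplies exactly the missing bookkeeping in the paper's last step, and your explicit justification of why $I(\rep^*\wedge\nuisance|\rvy)\leq I(\rep^*\wedge\nuisance)$ holds here (it is false in general) is a point the paper never confronts. For sufficiency your argument is essentially the paper's: both hinge on Condition (a) plus $H(\rvy|\rvx)=0$ giving $\rvy=\pi\p(\rvxa)$, with a direct DPI argument for the $I(\rep\p\wedge\rvy)$ objective and a concatenation-with-$\pi\p(\rvxa)$ contradiction for the $I(\rep\p\wedge\rvxa)$ objective. One caveat you share with the paper: the concatenated candidate $(\rep^*,\pi\p(\rvxa))$ must itself satisfy the constraint $I(\cdot\wedge\rva)=0$, and $I(\rep^*\wedge\rva)=0$ together with $I(\rvy\wedge\rva)=0$ does not by itself imply $I(\rep^*,\rvy\wedge\rva)=0$; this needs an extra appeal to the graphical model (or simply to be flagged as an assumption), but since the paper's own construction $\hat{\rep}=(\rep^*,\pi(\rvx\p))$ skips the same check, it is not a gap specific to your argument.
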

\textbf{Remarks.} 
(1) The objective of learning $\rep^*$ can be either task-independent (maximizing $I(\rep\p \wedge \rvxa)$), or task-dependent (maximizing $I(\rep\p \wedge \rvy)$). The former matches the ``InfoMax'' principle commonly used in self-supervised learning works~\cite{linsker1988self,hjelm2018learning}, while the latter can be achieved by supervised training (e.g., learning a classifier of $\rep$ for $\rvy$ with cross-entropy loss).\\
% (1) The objective of learning $\rep^*$ can be task-dependent:
% \Cref{eq:obj} maximizes the mutual information between the input $\rvx\p$ and the output $\rep\p$ of the encoder, which matches the ``InfoMax'' principle used in many self-supervised learning works. 
% \Cref{eq:obj_y} maximizes the mutual information between the representation $\rep\p$ and the label $\rvy$, which is achieved by supervised learning training (e.g., learning a classifier of $\rep$ for $\rvy$ with cross-entropy loss). \\
(2) When Condition (b) holds for $\epsilon=0$, representation $\rep\p$ is optimal (minimal sufficient).

\Cref{thm:min_suff_eps}, proved in \Cref{app:proof_main}, shows that if we have a good augmentation that maximally perturbs the label-irrelevant information while keeps the label-relevant information, 
then the representation learned on the augmented data can be minimal sufficient. 
% Condition (a) and (b) can be associated with some prior augmentation methods as discussed in \Cref{app:proof_add}.
\Cref{thm:min_suff_eps} serves as a principle of constructing augmentation. Based on this principle, we propose an auto-augment algorithm in \Cref{sec:method}, and verify the algorithm in a wide range tasks in \Cref{sec:exp}.

% \vspace{0.5em}
% \textbf{Comparison to Prior Augmentation Principles.}
% Some recent papers propose various rationales of augmentation, which implicitly lead to the requirements of . 
% Specifically, we show that the \textbf{symmetric augmentation} proposed in \citet{wang2020understanding} is a sufficient condition for Condition (a), as formalized in C\ref{lem:sufficent-augmentation} in \Cref{app:proof_add}.
% Furthermore, \citet{tian2020makes} propose an ``InfoMin'' principle of data augmentation, that minimizes the mutual information between different views (equivalent to $\min I(\rvx, \rvx\p)$). We show by \Cref{lem:invariance-nuisance} in \Cref{app:proof_add} that this InfoMin principle leads to the above Condition (b). 
% In contrast, our \Cref{thm:min_suff_eps} characterizes two key conditions of augmentation and directly relate them to the optimality of the learned representation.\fhc{Todo: this paragraph needs a bit of rewording.}
\section{Proposed Methods}
\label{sec:method}
\begin{wrapfigure}[5]{r}{0.3\textwidth}
\vspace{-1.em}
\includegraphics[scale=0.35]{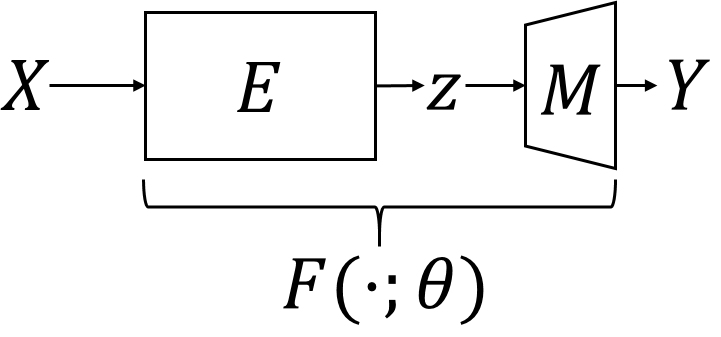}
\vspace{-2.em}
\caption{\footnotesize Network architecture.}
\label{fig:network}
\end{wrapfigure}
In this section, we introduce our data augmentation and how to obtain the augmentation using the representation learning network $F(\cdot)$. Then we show how to plug our augmentation into the representation learning procedure of $F(\cdot)$. 
\subsection{\oursfull (\ours) }%Adversarial Auto-Augment with Label Preservation}

As illustrated in the previous section, an ideal data augmentation $\rvxa$ for representation learning should contain as little information about nuisance $\nuisance$ as possible while still keeping all the information about class $\rvy$. Since $\nuisance$ is not observed, we transfer the objective $\min_{\rvxa} I(\rvxa \wedge \nuisance)$ into   $\min_{\rvxa} I(\rvxa \wedge \rvx)$ since $I(\rvxa \wedge \rvx)=I(\rvxa \wedge \nuisance)+I(\rvxa \wedge \rvy)$ and $I(\rvxa \wedge \rvy)$ is a constant under the constraint $I(\rvxa \wedge \rvy) = I(\rvx \wedge \rvy)$. Thus the optimization problem is:
\begin{equation}
\label{eq:final objective}
\min_{\rvxa} I(\rvxa \wedge \rvx) \quad
\mathrm{s.t.}~I(\rvxa \wedge \rvy) = I(\rvx \wedge \rvy).
\end{equation}

\vspace{-0.5em}
\textbf{Implementation of Mutual Information.} 
To solve \Cref{eq:final objective}, computing the mutual information terms $I(\rvxa \wedge \rvx)$, $I(\rvxa \wedge \rvy)$ and $I(\rvx \wedge \rvy)$ is required. 
Next, we will show how to compute these terms using a neural net classifier $F(\cdot;\theta)$, parameterized by $\theta$, that consists of two components: a representation encoder $E(\cdot)$ and a predictor $M(\cdot)$.
% and these two function together forms the nerual classifier $F(\cdot;\theta)$ whose parameter is $\theta$. 
Specifically, $F(\cdot;\theta)=M(E(\cdot))$, where the representation encoder $E(\cdot)$ maps input $\rvx$ into representation $\rep$,  and the predictor $M(\cdot)$ predicts the class of $\rep$. This is demonstrated in Figure~\ref{fig:network}.

\textit{Constraint implementation.} 
Since $I(\rvxa \wedge \rvy)=H(\rvy)-H(\rvy|\rvxa)$ and $I(\rvx \wedge \rvy)=H(\rvy)-H(\rvy|\rvx)$, we can remove the $H(\rvy)$ term in both sides and turn the constraint into $H(\rvy|\rvx)=H(\rvy|\rvxa)$. Thus we only need to compute the conditional entropy of $\rvy$ given $\rvx$ or $\rvxa$, which can be approximated through the neural net classifier: $H(\rvy|\rvx)=\mathrm{E}_{\rvx,\rvy}\left[-{\rm log}P(y|x)\right]\approx\mathrm{E}_{\rvx,\rvy}\left[-{\rm log}(F(x;\theta)[y])\right]$, where we use softmax class probability $F(x;\theta)[y]$  to approximate the  likelihood $P(y|x)$. And $H(\rvy|\rvxa)$ can be computed similarly.

\textit{Objective implementation.} 
Then we show how to compute the objective $I(\rvxa \wedge \rvx)$. Since $I(\rvxa \wedge \rvx)=H(\rvx)-H(\rvx|\rvxa)$ where $H(\rvx)$ is not related to $\rvxa$ and thus can be neglected, we only need to compute $H(\rvx|\rvxa)=\mathrm{E}_{\rvx,\rvxa}\left[-{\rm log}P(x|x')\right]$. We use the Learned Perceptual Image Patch Similarity~(LPIPS)~\cite{zhang2018unreasonable} between $x$ and $x'$ to compute the data likelihood $P(x|x')$ since LPIPS distance is a widely used metric to measure the data similarity in data generative model field~\cite{johnson2016perceptual,zhang2020cross} and many previous work has shown that LPIPS distance is the best surrogate for human comparisons of similarity~\cite{zhang2018unreasonable,laidlaw2020perceptual}, compared with any other distance including $\ell_2$ and $\ell_\infty$ distance. Although such surrogate may have error, it worth noting that Theorem~\ref{thm:min_suff_eps} allows the surrogate to have $\epsilon$ error. The LPIPS distance is defined by the $\ell_2$ distance of stacked feature maps from a neural network. Here we use $F(\cdot;\theta)$ to compute the LPIPS distance. Let $F(\cdot;\theta)$ has $L$ layers and $\widehat{F_l}(\cdot;\theta)$ denotes these channel-normalized activations at the $l$-th layer of the network. Next, the activations are normalized again by layer size and flattened into a single vector $\phi(x)\triangleq(\frac{\widehat{F_1}(x;\theta)}{\sqrt{w_1 h_1}},...,\frac{\widehat{F_L}(x;\theta)}{\sqrt{w_L h_L}})$, where $w_l$ and $h_l$ are the width and height of the activations in layer $l$, respectively. The LPIPS distance between input $x$ and the augmentation $x'$ is then defined as:
\begin{equation}
{\rm LPIPS}(x, x')\triangleq \|\phi(x)-\phi(x')\|_2.
\label{eq:lpips}
\end{equation}

\textbf{Constraint Relaxation for Efficiency.} \quad 
Now, given an input $x$, its data augmentation $x'$ can be computed by solving the following optimization problem using the neural network $F(\cdot;\theta)$ in practice:
\begin{equation}
\begin{aligned} 
&\min_{x'}- \|\phi(x)-\phi(x')\|_2\label{eq:practice objective}
&\mathrm{ s.t.}~{\rm log}F(x';\theta)[y]= {\rm log}F(x;\theta)[y].
\end{aligned}
\end{equation}
The equality constraint in \Cref{eq:practice objective} is too strict to solve since it may be inefficient to search for an $x'$ that exactly satisfies ${\rm log}F(x';\theta)[y]= {\rm log}F(x;\theta)[y]$. Thus we relax the constraint with a small $\sigma$ and change the constaint into: ${\rm log}F(x';\theta)[y]\geq {\rm log}F(x;\theta)[y]-\sigma$. It's worth noting that if $\sigma$ is sufficiently small, the label is still well preserved. There is a trade-off to the value of $\sigma$, we search $\sigma$ to find a sweet spot where the problem is practical to solve and meanwhile the label is well preserved.

There are many off-the-shelf methods that solve %\myst{the problem}
\Cref{eq:practice objective}, and here we apply the Fast Lagrangian Attack Method \cite{laidlaw2020perceptual} as a demonstration. 
We initialize $x'$ by $x$ plus a uniform noise. 
And we find the optimal $x'$  by solving the following the Lagrangian multiplier function and gradually scheduling the value of the multiplier $\lambda$:
\begin{equation}
\min_{x'} -\|\phi(x)-\phi(x')\|_2+\lambda \max(0, {\rm log}F(x;\theta)[y]-{\rm log}F(x';\theta)[y]-\sigma)\label{eq:lagarangian}
\end{equation}
The detailed procedure of the algorithm can be found in Appendix~\ref{alg:Framwork}.
The algorithm has a similar form as adversarial attack~\cite{zhang2020principal, yang2021class} in that they both find an optimal augmentation $x'$ by adding perturbations to the original image $x$. 
However, the difference is that we aim to generate hard augmentation that preserves the label, while adversarial attack aims to change the class label.

\subsection{Plugging \ours into a Representation Learning Task}
One primary advantage of \ours is that it only requires a neural net $F(\cdot;\theta)$ to produce the augmentation and $F(\cdot;\theta)$ can be the current representation learning model, so we can plug \ours into any representation learning procedure requiring no additional parameters, which is plug-and-play and parameter-free. At each step, we first fix $F(\cdot;\theta)$ to generate the augmentation $x'$  by solving \Cref{eq:lagarangian} using Algorithm~\ref{alg:Framwork}. % to obtain our augmentation $\rvx$. 
And then we train $F(\cdot;\theta)$ by running the original representation learning algorithm using our augmentation $x'$.

\textbf{Data selection.} \quad 
It is not necessary to find hard positives for every sample. To save more computation, we can apply a sharpness-aware criterion, i.e., time-consistency~(TCS)~\cite{zhou2020time}, to select the most informative data ($\tau\%$ data with the lowest TCS in Algorithm~\ref{alg:plug_in}) that have sharp loss landscapes, which indicate the existence of nearby hard positives, and we only apply \ours to them. It reduces the computational cost without degrading the performance because (1) the improvement brought by augmentations is limited for examples whose loss already reaches a flat minimum, while the model does not generalize well near examples with a sharp loss landscapes; and (2) the hard positives for examples with flat loss landscape are distant from the original ones and might introduce extra bias to the training. \looseness-1  
% Here we apply some data selection strategy because: (1) some data points may not require augmentation \fh{since the network is already highly confidently knowledgeable around the neighborhoods of the points}; (2) data selection will help save computational cost of our method. We propose to select points to which the current model is not smooth, because it means the model has not covered \fh{still lack knowledge about} this point (or the point's local neighbor region), thus we need data augmentation for these points to encourage the model to learn them. 
% Specifically, we use time-consistency~(TCS)~\cite{zhou2020time}, which measures the smoothness of data for the current model, for data selection.  
% We select $\tau\%$ data with the lowest TCS. %We select those data whose TCS less than a certain threshold to apply our augmentation.

% In this paper, we also discuss how to select the most informative examples to apply our auto-augmentation on, in order to save the computational cost. 
% The proposed selection criterion prefers \textcolor{red}{examples with hard positive examples in its neighborhood} \fhc{not very clear?}, which can be reflected by the model sharpness through the time-consistency of its model prediction in hindsight \fhc{do we have experimental results on this?}. 
% It can be incorporated with other data selection strategies. In addition, the representations $Z$ in these tasks can refer to the model predictions for the labels so they can well fit into our representation learning principles \fhc{not very clear?}.  

\ours is compatible with any representation learning task minimizing a loss $L:\set{X}\times\set{Y}\times\set{W}\rightarrow\set{R_+}$, which takes in a data batch and a model to output a loss value. $\set{Y}$ here denotes the groundtruth label for labeled data and pseudo label for unlabeled data.
The pseudo-code of plugging \ours into the representation learning procedure with TCS-based data selection is provided in \Cref{alg:plug_in}.

\begin{algorithm}[!htbp]
\caption{Plug \ours into any representation learning procedure}
\label{alg:plug_in}
\begin{algorithmic}[1]
\REQUIRE Loss for the targeted task $L:\set{X}\times\set{Y}\times\set{W}\rightarrow\set{R_+}$; training data $(\set{X},\set{Y})$;  neural network $F(\cdot;\theta)$; class preserving margin $\epsilon$; data selection ratio $\tau$; learning rate $\eta$;
\ENSURE Model parameter $\theta$ trained with \ours
\WHILE{\textit{not converged}}
\STATE Sample batch $\set{B}=\left\{(x_1, y_1),...,(x_b,y_b)\right\}\sim (\set{X},\set{Y})$;
\STATE Data selection: $\set{S}\leftarrow \tau\%$ data with the lowest TCS in $\set{B}$; %$\set{S}=\left\{(x_1,y_1),...,(x_m,y_m)\right\}$
\STATE \ours: Freeze $\theta$ and solve \Cref{eq:lagarangian} using Algorithm~\ref{alg:Framwork} for every sample in $\set{S}$, resulting in an augmented set $\set{A}=\left\{(x'_1,y_1),...,(x'_m,y_m)\right\}$ of size $m=|\set{S}|$;
\STATE Learning with \ours augmented data and original data: $\theta\leftarrow \theta - \eta[\nabla_\theta L(\set{B};\theta)+\nabla_\theta L(\set{A};\theta)]$;
\ENDWHILE
\end{algorithmic}
\end{algorithm}
\section{Experiments}
\label{sec:exp}
In this section, we apply \ours as a data augmentation method to several popular methods for three different learning tasks, i.e., (1) semi-supervised classification; (2) noisy-label learning and (3) medical image classification. In all the experiments, \ours can (1) consistently improve the convergence and test accuracy of existing methods and (2) autonomously produce augmentations that bring non-trivial improvement even without any domain knowledge available. A walk-clock time comparison is given in the Appendix, showing \ours effectively reduces the computational cost. In addition, we conduct a thorough sensitivity study of \ours by changing (1) label-preserving margin and (2) data selection ratio on the three tasks. More experimental details can be found in the Appendix.\looseness-1

\begin{figure}[!htbp]
\vspace{-.0em}
\centering
\includegraphics[scale=0.23]{./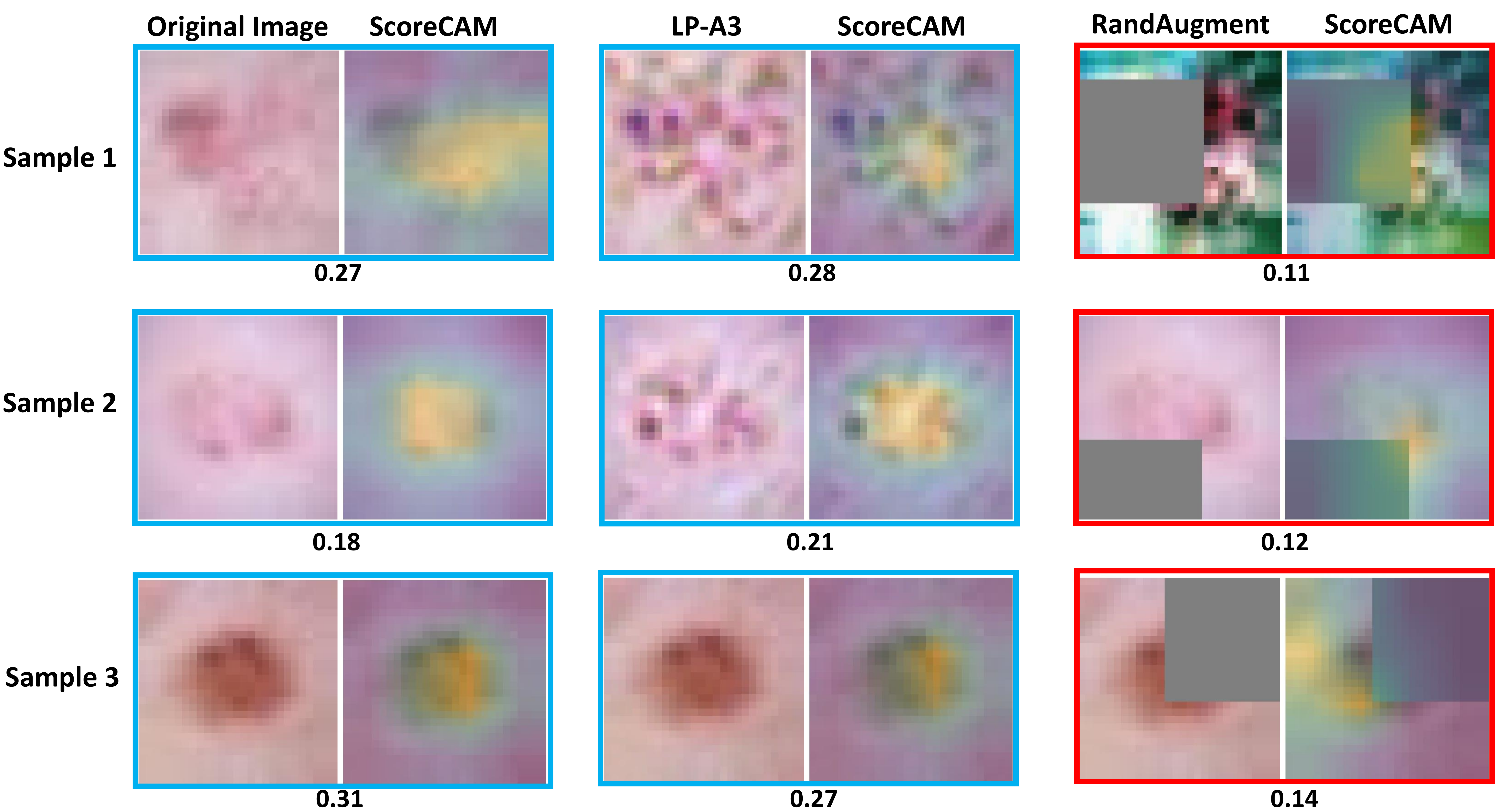}
\vspace{-0.5em}
\caption{\footnotesize {\bf Visualization of medical image augmentations} on the test set of DermaMnist. Blue (red) bounding box marks the correct (wrong) prediction of a ResNet18 classifier and its confidence on the groundtruth class is reported beneath the box. ScoreCAM~\cite{wang2020score} produces a heatmap highlighting important areas (by yellow color) of an image that a neural net mainly relies on to make the prediction.}
\label{fig:visualization}
\vspace{-1.em}
\end{figure}

\subsection{Medical Image Augmentations produced by \ours vs. RandAugment}
\label{sub:visualization}
We visualize data augmentations generated by \ours and RandAugment~\cite{RandAugment} on the testset of DermaMnist~\cite{medmnistv2} with a ResNet-18 classifier and its confidence on the groundtruth class in Fig.~\ref{fig:visualization}. We also use ScoreCAM~\cite{wang2020score} as an interpretation method to highlight the area in each image that the classifier relies on to make the prediction. we find that \ours preserves relevant derma areas highlighted by ScoreCAM and they are consistent with those in the original image. On the contrary, RandAugment changes the color or occludes those derma areas, resulting in highly different ScoreCAM heatmaps and hence wrong predictions (red bounding box in Fig.~\ref{fig:visualization}).
% has huge difference with that of original image. As a result, RandAugment may lead to wrong prediction of the neural net (denoted by red bounding box in Fig.~\ref{fig:visualization}) and low confidence to the groundtruth class. 
Instead, \ours can preserve the class information and mainly perturb the class-unrelated area in the original image.

\begin{figure}[t]
\vspace{-1em}
\centering
    \begin{subfigure}[b]{0.24\textwidth}
         \centering
\includegraphics[width=1.38in]{./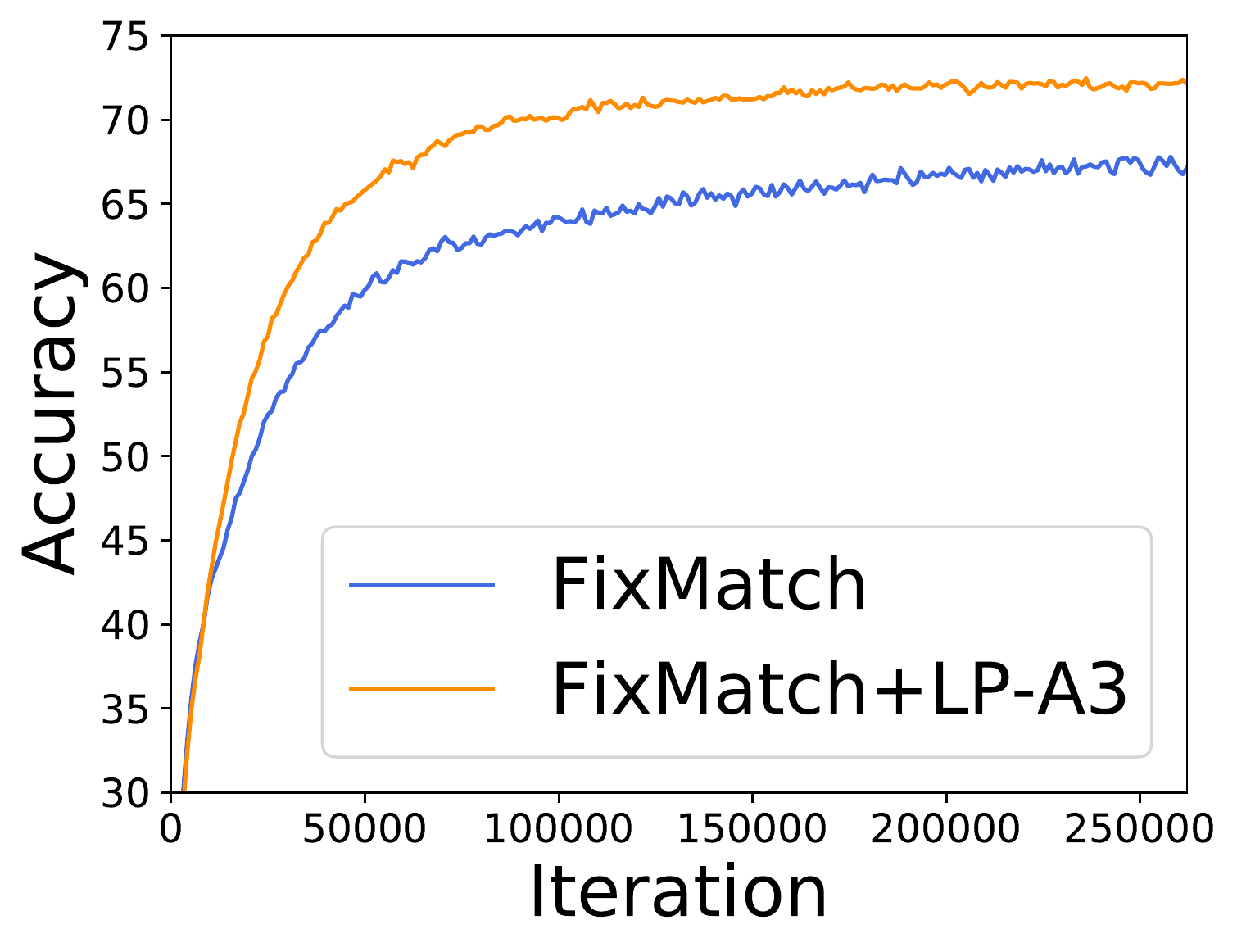}
         \vspace{-1em}
         \caption{FixMatch}
     \end{subfigure}
     \hfill
     \begin{subfigure}[b]{0.24\textwidth}
         \centering
\includegraphics[width=1.45in]{./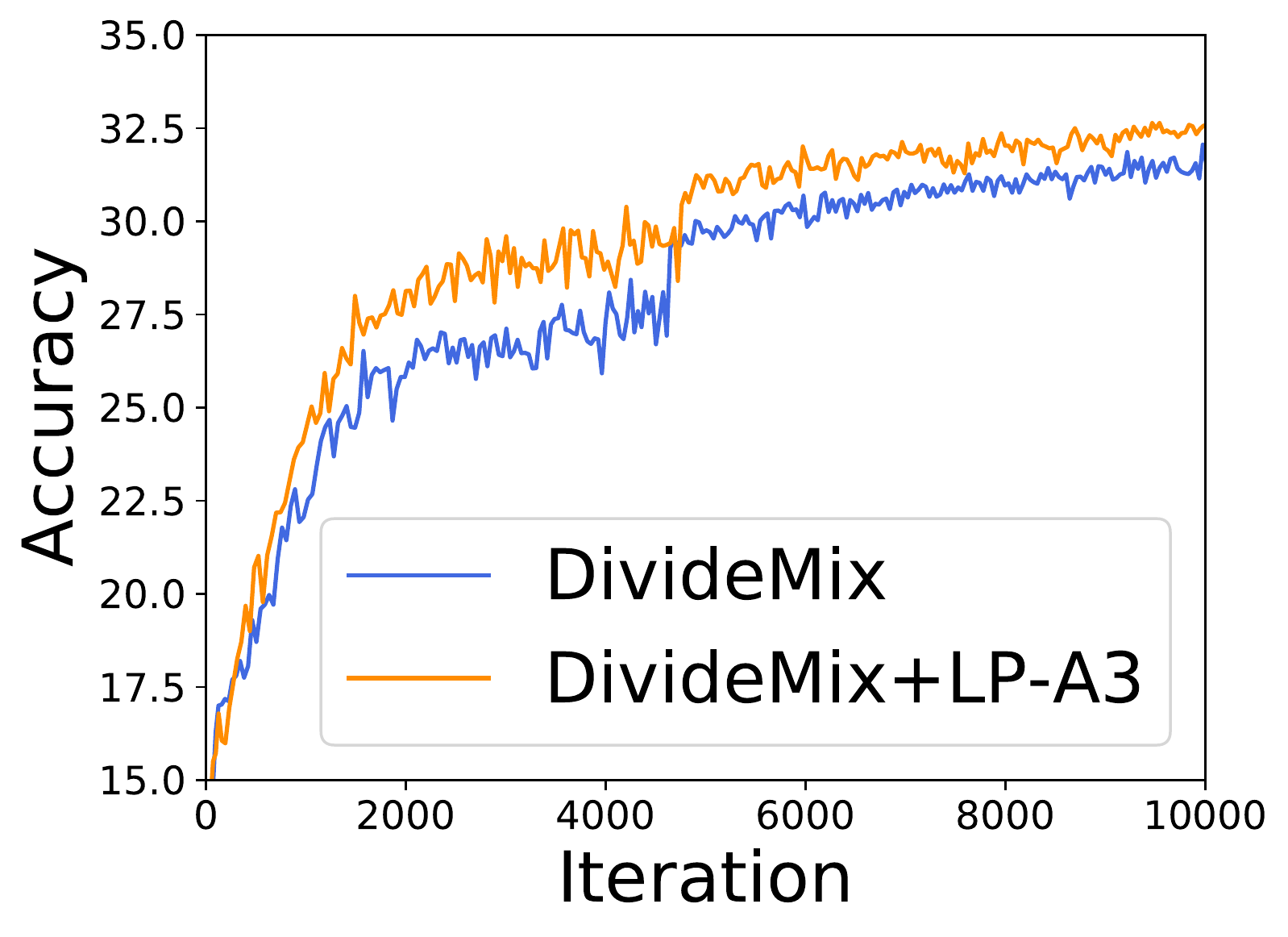}
         \vspace{-1em}
         \caption{DivideMix}
     \end{subfigure}
          \hfill
     \begin{subfigure}[b]{0.24\textwidth}
         \centering
\includegraphics[width=1.42in]{./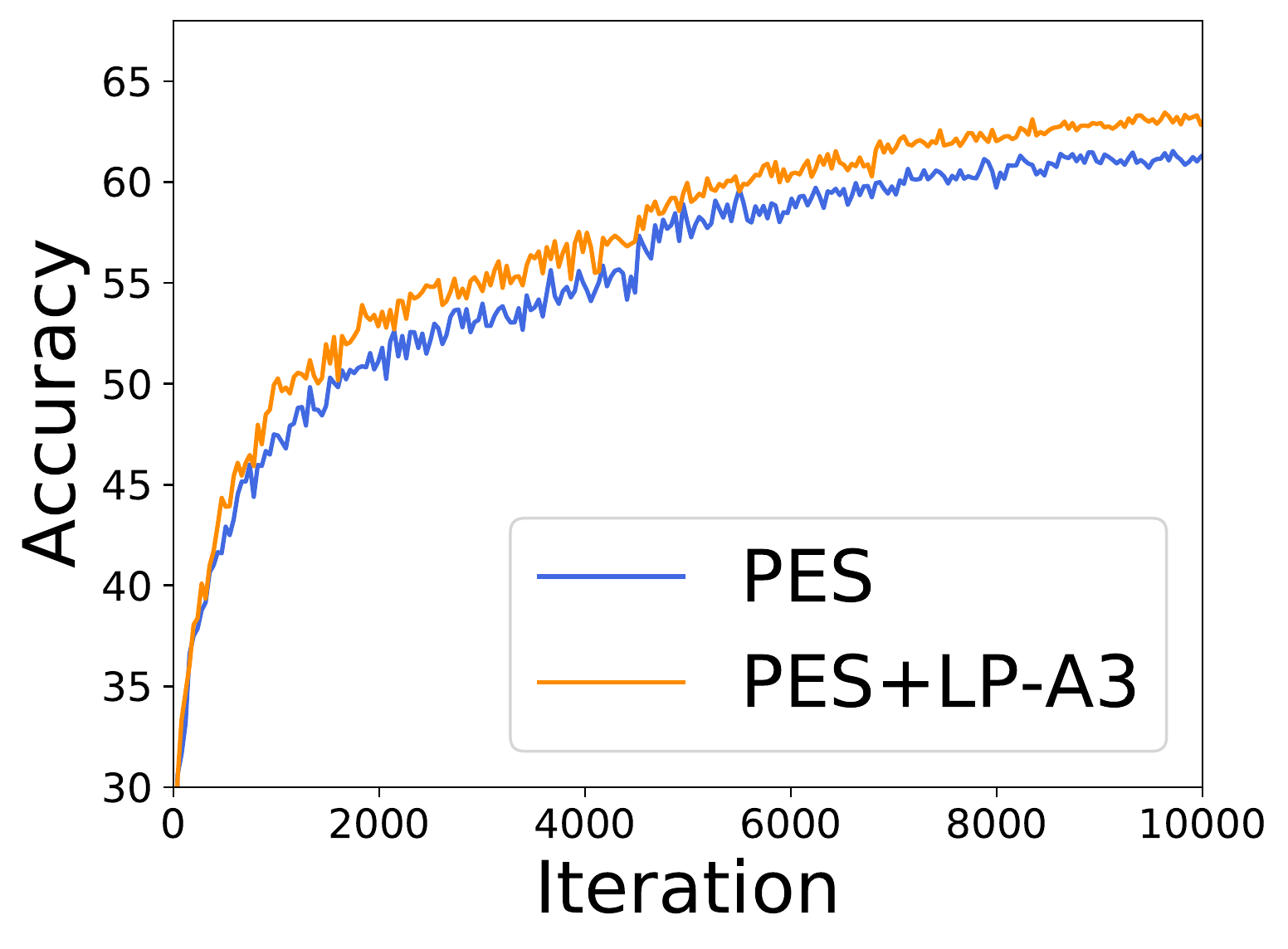}
         \vspace{-1em}
         \caption{PES}
     \end{subfigure}
          \hfill
     \begin{subfigure}[b]{0.24\textwidth}
         \centering
\includegraphics[width=1.41in]{./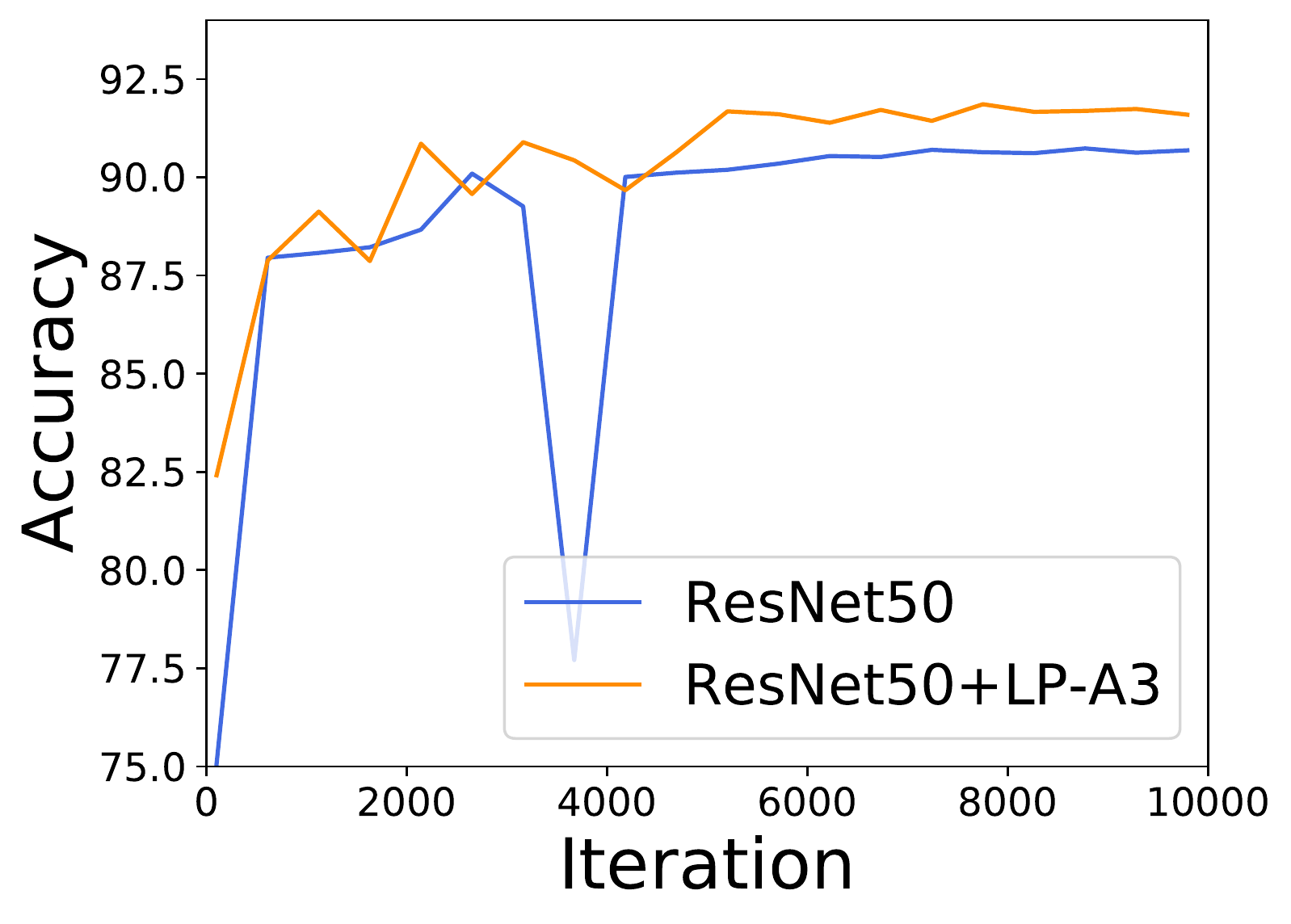}
         \vspace{-1em}
         \caption{MedMnist}
     \end{subfigure}
  \caption{\footnotesize {\bf Convergence Curve} when applying \ours to different tasks and baselines.}
  \label{fig:convergence}
\vspace{-1em} 
\end{figure}

\subsection{Applying \ours to Three Different Representation Learning Tasks}
 Here we apply LP-A3 to three different tasks by pluging LP-A3 to existing baselines of each task.
Fig.~\ref{fig:convergence} shows that LP-A3 greatly speeds up the convergence of each baseline.

\textbf{Semi-supervised learning}\quad 
\label{sub:exp_semi_supervised}
To evaluate how \ours improves the learning without sufficient labeled data, we conduct experiments on semi-supervised classification on standard benchmarks including CIFAR~\cite{krizhevsky2009learning} and STL-10~\cite{coates2011analysis} where only a very small amount of labels are revealed. We apply \ours in FixMatch~\cite{sohn2020fixmatch} and compare it with the original FixMatch and InfoMin~\cite{tian2020makes}, a learnable augmentation method for semi-supervised learning. Their results are reported in Table~\ref{tab:semi-supervised classification}, where \ours consistently improves FixMatch and the improvement becomes more significant if reducing the labeled data. It's worth noting that the original FixMatch already employs a carefully designed set of pre-defined augmentations~\cite{cubuk2020randaugment} that have been tuned to achieve the best performance, indicating that \ours is complementary to existing data augmentations. Moreover, \ours also outperforms InfoMin by a large margin ($>5\%$), which indicates that \ours is also superior to existing learnable augmentations.\looseness-1

\begin{table}[!htbp]
\scriptsize
	\centering
     \caption{\footnotesize{\bf Semi-supervised Learning}
     performance on CIFAR with different amounts of labeled data. $^\S$ denotes  results reproduced using the official code. FixMatch and \ours are trained for $2^{18}$ SGD steps. InfoMin's results on CIFAR are missing since their paper only reports the result on STL-10. Error bars (mean and std) are computed over three random trails.}
	\begin{tabular}{lccccccc}
    \toprule
    Dataset&\multicolumn{3}{c}{CIFAR10}&\multicolumn{3}{c}{CIFAR100}&STL-10\\
    \cmidrule(r){2-4}\cmidrule(r){5-8}
    \# Label & 40 & 250 & 4000 & 400 & 2500 & 10000 & 1000 \\
    \midrule
    InfoMin~(RGB)~\cite{tian2020makes} &-&-&-&-&-&-&86.0\\
    InfoMin~(YDbDr)~\cite{tian2020makes}&-&-&-&-&-&-&87.0\\
    FixMatch~\cite{sohn2020fixmatch}$^\S$ &89.51$\pm$3.14 &93.81$\pm$0.29&94.66$\pm$0.13&49.30$\pm$2.45&67.21$\pm$0.94&74.31$\pm$0.35&91.59$\pm$0.16\\
    FixMatch~\cite{sohn2020fixmatch} + \ours & \textbf{92.39$\pm$1.21}&\textbf{94.03$\pm$0.31}&\textbf{95.11$\pm$0.17}&\textbf{56.16$\pm$1.82}&\textbf{72.23$\pm$0.57}&\textbf{77.11$\pm$0.16}&\textbf{92.63$\pm$0.14} \\
    \bottomrule
    \end{tabular}
    \label{tab:semi-supervised classification}
\end{table}

\textbf{Noisy-label Learning}\quad 
\label{sub:exp_noisy_label}
Data augmentation is critical to noisy-label learning by providing different views of data to prevent neural nets from overfitting to noisy labels. We apply \ours to two state-of-the-art methods DivideMix~\cite{Li2020DivideMix} and  PES~\cite{bai2021understanding} on CIFAR with different ratios of noise labels. \ours can consistently improve the performance of these two SoTA methods and the improvement is more significant in more challenging cases with higher noise ratios, e.g., on CIFAR100 with 90\% of labels to be noisy, \ours improves PES by $\geq 15\%$ (Table~\ref{tab:noisy-label classification}).

\begin{table}[!htbp]
\footnotesize
\setlength\tabcolsep{5.5pt}
	\centering
     \caption{{\bf Noisy-label learning} performance on CIFAR with different ratios of symmetric label noises. $^\S$ denotes the results reproduced by the official code. Error bars (mean and std) are computed over three random trails.}
	\begin{tabular}{lcccccc}
    \toprule
    Dataset&\multicolumn{3}{c}{CIFAR10}&\multicolumn{3}{c}{CIFAR100}\\
    \cmidrule(r){2-4}\cmidrule(r){5-7}
    Noise Ratio&50\%&80\%&90\% &50\%&80\%&90\%\\
    \midrule
    Mixup~\cite{zhang2017mixup}&87.1&71.6&52.2&57.3&30.8&14.6\\
    P-correction~\cite{yi2019probabilistic}&88.7&76.5&58.2&56.4&20.7&8.8\\
    M-correlation~\cite{arazo2019unsupervised}&88.8&76.1&58.3&58.0&40.1&14.3\\
    \midrule
    DivideMix~\cite{Li2020DivideMix}  &94.4&92.9&75.4&74.2&59.6&31.0\\
    DivideMix+\ours&94.89$\pm$0.05&\textbf{93.70$\pm$0.19}&79.35$\pm$1.33&74.12$\pm$0.23&61.00$\pm$0.34&32.55$\pm$0.25\\
    \midrule
    PES$^\S$~\cite{bai2021understanding}&94.89$\pm$0.12&92.15$\pm$0.23&84.98$\pm$0.36&74.19$\pm$0.23&61.47$\pm$0.38&21.15$\pm$3.15\\
    PES+\ours&\textbf{95.10$\pm$0.14}&93.26$\pm$0.21&\textbf{87.71$\pm$0.36}&\textbf{74.57$\pm$0.25}&\textbf{62.98$\pm$0.49}&\textbf{40.61$\pm$1.10}\\
    \bottomrule
    \end{tabular}
    \label{tab:noisy-label classification}
\end{table}

\begin{table}[!htbp]
\footnotesize
	\centering
     \caption{\footnotesize{\bf Medical Image Classification} on MedMNIST~\cite{medmnistv2}. All the models are trained for 100 epochs. Error bars (mean and std) are computed over three random trails.}
{\begin{tabular}{lcccc}
    \toprule
    Method&PathMNIST&DermaMNIST&TissueMNIST&BloodMNIST\\
    \midrule
     ResNet-18&94.34$\pm$0.18&76.14$\pm$0.09&68.28$\pm$0.17&96.81$\pm$0.19\\
     ResNet-18+RandAugment&93.52$\pm$0.09&73.71$\pm$0.33&62.03$\pm$0.14&95.00$\pm$0.21\\
     ResNet-18+\ours&\textbf{94.42$\pm$0.24}&\textbf{76.22$\pm$0.27}&\textbf{68.63$\pm$0.14}&\textbf{96.97$\pm$0.06}\\
    \midrule
     ResNet-50&94.47$\pm$0.38&75.24$\pm$0.27&69.69$\pm$0.23&96.91$\pm$0.06\\
     ResNet-50+RandAugment&94.02$\pm$0.37&71.65$\pm$0.30&65.13$\pm$0.33&95.14$\pm$0.06\\
     ResNet-50+\ours&\textbf{94.57$\pm$0.07}&\textbf{75.71$\pm$0.22}&\textbf{69.89$\pm$0.08}&\textbf{97.01$\pm$0.32}\\
    \midrule
     &OctMNIST&OrganAMNIST&OrganCMNIST&OrganSMNIST\\
      ResNet-18&78.67$\pm$0.26&94.21$\pm$0.09&91.81$\pm$0.12&81.57$\pm$0.07\\
     ResNet-18+RandAugment&76.00$\pm$0.24&94.18$\pm$0.20&91.38$\pm$0.14&80.52$\pm$0.32\\
     ResNet-18+\ours&\textbf{80.27$\pm$0.54}&\textbf{94.73$\pm$0.21}&\textbf{92.41$\pm$0.22}&\textbf{82.28$\pm$0.38}\\
    \midrule
     ResNet-50&78.37$\pm$0.52&94.31$\pm$0.14&91.80$\pm$0.14&81.11$\pm$0.21\\
     ResNet-50+RandAugment&76.63$\pm$0.58&94.59$\pm$0.17&91.10$\pm$0.12&80.47$\pm$0.37\\
     ResNet-50+\ours&\textbf{79.40$\pm$0.36}&\textbf{94.95$\pm$0.19}&\textbf{92.16$\pm$0.23}&\textbf{82.15$\pm$0.08}\\
     \bottomrule
    \end{tabular}}
    \label{tab:medmnist}
\end{table}

\textbf{Medical Image Classification}\quad 
 To evaluate the performance in specific areas without domain knowledge, we compare \ours with existing data augmentations on medical image classification tasks from MedMNIST~\cite{medmnistv2}, which is composed of several sub-dataset with various styles of medical images. We compare our \ours with RandAugment~\cite{cubuk2020randaugment} on training ResNet-18 and ResNet-50~\cite{he2016deep}. We report the results in Table~\ref{tab:medmnist}, where RandAugment designed for natural images fails to improve the performance in this scenario. In contrast, \ours does not rely on any domain knowledge brings improvement to all the datasets, especially for OctMNIST where the improvement is over $1\%$. The results indicate that hand-crafted strong data augmentations do not generalize to all domains but \ours can autonomously produce augmentations guided by our representation learning principle without relying on any domain knowledge.

\begin{figure}[!htbp]
\vspace{-.0em}
\centering
  \includegraphics[scale=0.21]{./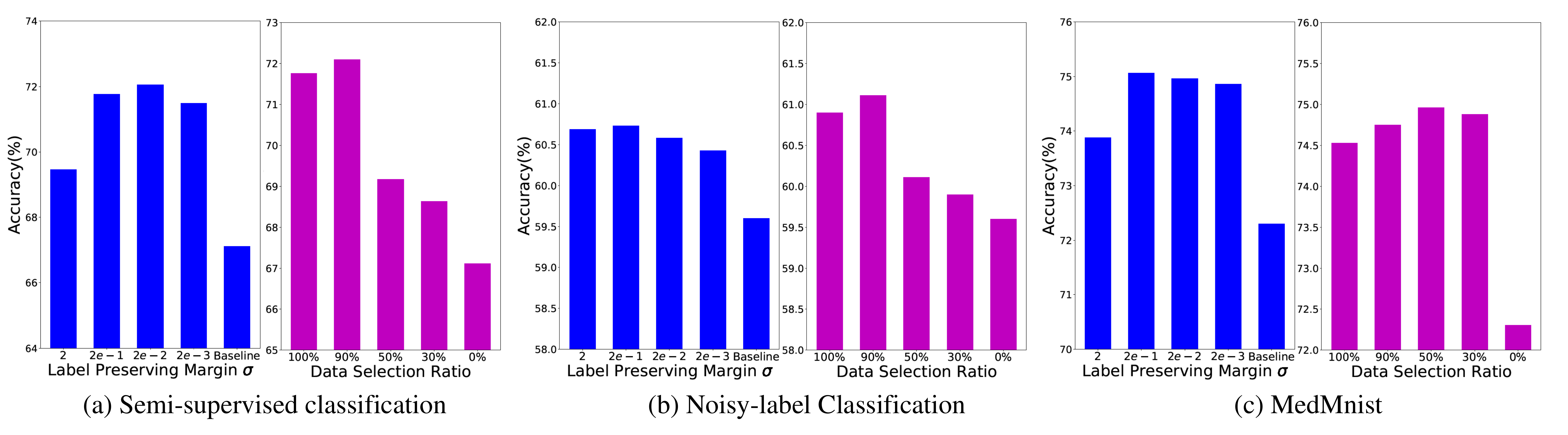}
\vspace{-2em}
\caption{\footnotesize {\bf Sensitivity Analysis} of label preserving margin $\sigma$ and data selection ratio.}
\label{fig:ablation}
\end{figure}

\vspace{-0.5em}
\subsection{Sensitivity Analysis of Hyperparameters}
\label{sec:sensitivity}
\vspace{-0.5em}
\textbf{Label preserving margin $\sigma$:}\quad We evaluate how \ours performs with different label preserving margin $\sigma$
 on the three tasks. The results are presented in Fig.~\ref{fig:ablation}, where a reverse U-shape is observed. And \ours using all the evaluated $\sigma$ outperforms baselines, which indicates \ours is robust to $\sigma$.

\textbf{Data selection ratio:}\quad  We evaluate the performance of \ours with different amount of data selected on the three tasks.
As shown in Fig.~\ref{fig:ablation}, selecting all the data does not perform the best since some data' augmentations are useless to apply data augmentation. Moreover, selecting only $30\%$ data to apply \ours can outperform all baselines by a large margin, especially on MedMNIST where the improvement is $\geq 2\%$, which verifies the effectiveness of \ours and our data selection method.

\section{Conclusion}
\label{sec:conclusion}
\vspace{-0.5em}
In this paper, we study how to automatically generate domain-agnostic but task-informed data augmentations. 
We first investigate the conditions required for augmentations leading to representations that preserves the task (label) information and then derive an optimization objective for the augmentations. 
For practicality, we further propose a surrogate of the derived objective that can be efficiently
computed from the intermediate-layer representations of the model-in-training. The surrogate is built
upon the data likelihood estimation through perceptual distance. This leads to \ours, a general and autonomous data augmentation technique applicable to a variety
of machine learning tasks, such as supervised, semi-supervised and noisy-label learning. In experiments,
we demonstrate that \ours can consistently bring improvement to SoTA methods for different tasks even without domain knowledge. In future work, we will extend \ours to more learning tasks and further improve its efficiency.\looseness-1

\section*{Acknowledgements}
This work was supported by, the Major Science and Technology Innovation 2030 ``New Generation Artificial Intelligence'' key project under Grant 2021ZD0111700,  NSFC No. 61872329, No. 62222117,  and the Fundamental Research Funds for the Central Universities under contract WK3490000005. Huang, Sun and Su are supported by NSF-IIS-FAI program, DOD-ONR-Office of Naval Research,
DOD-DARPA-Defense Advanced Research Projects Agency Guaranteeing AI Robustness against Deception (GARD). Huang is also supported by Adobe, Capital One and JP Morgan faculty fellowships.

\bibliographystyle{plain}

%%%%%%%%%%%%%%%%%%%%%%%%%%%%%%%%%%%%%%%%%%%%%%%%%%%%%%%%%%%%
\section*{Checklist}

% %%% BEGIN INSTRUCTIONS %%%
% The checklist follows the references.  Please
% read the checklist guidelines carefully for information on how to answer these
% questions.  For each question, change the default \answerTODO{} to \answerYes{},
% \answerNo{}, or \answerNA{}.  You are strongly encouraged to include a {\bf
% justification to your answer}, either by referencing the appropriate section of
% your paper or providing a brief inline description.  For example:
% \begin{itemize}
%   \item Did you include the license to the code and datasets? \answerYes{See Section ?.}
%   \item Did you include the license to the code and datasets? \answerNo{The code and the data are proprietary.}
%   \item Did you include the license to the code and datasets? \answerNA{}
% \end{itemize}
% Please do not modify the questions and only use the provided macros for your
% answers.  Note that the Checklist section does not count towards the page
% limit.  In your paper, please delete this instructions block and only keep the
% Checklist section heading above along with the questions/answers below.
% %%% END INSTRUCTIONS %%%

\begin{enumerate}

\item For all authors...
\begin{enumerate}
  \item Do the main claims made in the abstract and introduction accurately reflect the paper's contributions and scope?
    \answerYes{}
  \item Did you describe the limitations of your work?
    \answerNo{}
  \item Did you discuss any potential negative societal impacts of your work?
   \answerNA{}
  \item Have you read the ethics review guidelines and ensured that your paper conforms to them?
     \answerYes{}
\end{enumerate}

\item If you are including theoretical results...
\begin{enumerate}
  \item Did you state the full set of assumptions of all theoretical results?
       \answerYes{See Sec.\ref{sec:theory}}
        \item Did you include complete proofs of all theoretical results?
     \answerYes{See Appendix~\ref{app:proof}}
\end{enumerate}

\item If you ran experiments...
\begin{enumerate}
  \item Did you include the code, data, and instructions needed to reproduce the main experimental results (either in the supplemental material or as a URL)?
  \answerNo{Will open source later}
  \item Did you specify all the training details (e.g., data splits, hyperparameters, how they were chosen)?
 \answerYes{See Aappendix~\ref{app:exp}}
        \item Did you report error bars (e.g., with respect to the random seed after running experiments multiple times)?
  \answerNo{}
        \item Did you include the total amount of compute and the type of resources used (e.g., type of GPUs, internal cluster, or cloud provider)?
  \answerNo{}
\end{enumerate}

\item If you are using existing assets (e.g., code, data, models) or curating/releasing new assets...
\begin{enumerate}
  \item If your work uses existing assets, did you cite the creators?
 \answerYes{}
  \item Did you mention the license of the assets?
  \answerNA{}
  \item Did you include any new assets either in the supplemental material or as a URL?
  \answerNA{}
  \item Did you discuss whether and how consent was obtained from people whose data you're using/curating?
  \answerNA{}
  \item Did you discuss whether the data you are using/curating contains personally identifiable information or offensive content?
  \answerNA{}
\end{enumerate}

\item If you used crowdsourcing or conducted research with human subjects...
\begin{enumerate}
  \item Did you include the full text of instructions given to participants and screenshots, if applicable?
    \answerNA{}
  \item Did you describe any potential participant risks, with links to Institutional Review Board (IRB) approvals, if applicable?
  \answerNA{}
  \item Did you include the estimated hourly wage paid to participants and the total amount spent on participant compensation?
  \answerNA{}
\end{enumerate}

\end{enumerate}

%%%%%%%%%%%%%%%%%%%%%%%%%%%%%%%%%%%%%%%%%%%%%%%%%%%%%%%%%%%%

\newpage
\appendix

{\centering \bf \LARGE
    Supplementary Material
}
\section{Algorithmic Details}
\label{app:algo}
\subsection{Data Selection Via Time-Consistency}
We use time-consistency~(TCS)~\cite{zhou2020time} to select informative sample to apply our augmentation, which computes the consistency of the output distribution for each sample along the training procedure. Specifically, TCS metric $c^t(x)$ for an individual sample is an negative exponential moving average of  $a^t(x)$ over training history before $t$:
\begin{align}
c^{t}(x)&=\gamma_{c}\left(-a^{t}(x)\right)+\left(1-\gamma_{c}\right) c^{t-1}(x)\\
    a^{t}(x) &\triangleq D_{K L}\left(F^{t-1}(x) \| F^{t}(x)\right)+\left|\log
\frac{F^{t-1}(x)[y^{t-1}(x)]}{F^{t}(x)[y^{t-1}(x)]}\right|
\end{align}
where $D_{K L}(\cdot\|\cdot)$ is Kullback–Leibler divergence, $y^{t-1}(x)$ is pesudo label (for unlabeled data) or real label (for labeled data) for $x$ at step $t$ and $\gamma_{c} \in [0,1]$ is a discount factor. Intuitively, the KL-divergence between output distributions measures how consistent the output is between two consecutive steps, and a moving average of $a^t(x)$ naturally captures inconsistency of $x$ over time quantify. And larger $c^t(x)$ means better time-consistency.
We select top $\tau\%$ sample with the lowest TCS to apply our data augmentation because samples with small TCS tend to have sharp loss landscapes. These samples provide more informative gradients than others and applying our model-adaptive data augmentations can bring more improvement to their representation invariance and loss smoothness. In  Fig.~\ref{fig:ablation}, we conduct a thorough sensistivity analysis on $\tau\%$ over three tasks and find that sample selection with TCS can effectively improve the performance. Moreover, in this way, we do not need to apply our augmentation to every training samples and thus save the training cost.

\subsection{Fast Lagaragian  Attack Method}

We use the fast lagaragian  perceptual attack method (Algorithm 3 in \cite{laidlaw2020perceptual}) to solve the Lagragian multiplier  function in Equation.(\ref{eq:lagarangian}), which finds the optimal $x'$ through gradient descent over $x'$, starting at $x$ with a small amount of noise added.  
During the $T$ gradient descent steps, $\lambda$ is increased exponentially form 1 to 10 and the step size is decreased. $T$ is set to be 5 for all the experiments.
\begin{algorithm}[htb]
\caption{Fast Lagarangian Attack Method}
\label{alg:Framwork}
\begin{algorithmic}[1]
\REQUIRE ~~\\
   Training data ($x$,$y$); The class preserving margin $\sigma$; Neural Network $F(\cdot)$\\
\ENSURE ~~\\ 
\STATE $x'=x+0.01*\mathcal{N}(0,1)$
          \FOR{$t=1,...,T$}
          \STATE $\lambda\leftarrow 10^{t/T}$
          \STATE $\triangle=-\nabla_{x'}[\|\phi(x)-\phi(x')\|_2-\lambda \max(0, {\rm log}F(x;\theta)[y]-{\rm log}F(x';\theta)[y]-\sigma)]$\\
          \STATE $\hat{\triangle}=\triangle/\|\triangle\|_2$
          \STATE $\gamma=\epsilon*(0.1)^t/T$
          \STATE $m\leftarrow (F(x;\theta)[y]-F(x'+h\hat{\triangle};\theta)[y])/h$
          \STATE $x' \leftarrow x+(\gamma/m)\hat{\triangle}$
    \ENDFOR
\end{algorithmic}
\end{algorithm}

\section{Additional Theoretical Results and Proofs}
\label{app:proof}

\subsection{Proof of Theorem~\ref{thm:min_suff_eps}}
\label{app:proof_main}

\begin{proof}[\textbf{Proof of Theorem~\ref{thm:min_suff_eps}}]
    % Because the augmentation is in-class, i.e., $I(\rvx\p \wedge \rvy) = I(\rvx \wedge \rvy)$, and based on Assumption~\ref{assump:determinstic}, there exists a function $\pi\p$ such that $\pi\p(\rvx\p)=\rvy$ regardless of what value $\rva$ takes. Therefore, the optimizer $\rep^*$ to Problem~\eqref{eq:obj} is sufficient for $\rvy$ (satisfies 
    % $I(\rep^* \wedge \rvy) = I(\rvx\p \wedge \rvy) = I(\rvx \wedge \rvy) = H(\rvy)$).
    Problem~\eqref{eq:obj_all} contains two versions of objectives for $\rep\p$:
    
    \begin{equation}
    \label{eq:obj}
        \mathrm{argmax}_{\rep\p} I(\rep\p \wedge \rvxa) \text{ subject to } I(\rep\p \wedge \rva) = 0,
    \end{equation}
    or 
    \begin{equation}
    \label{eq:obj_y}
        \mathrm{argmax}_{\rep\p} I(\rep\p \wedge \rvy) \text{ subject to } I(\rep\p \wedge \rva) = 0.
    \end{equation}
    
    Both Problem~\eqref{eq:obj} and Problem~\eqref{eq:obj_y}  lead to the $\epsilon$-minimal sufficient representation $\rep^*$. We first prove the more challenging Problem~\eqref{eq:obj} objective.
    
    \bigskip
    \textbf{I. For Problem~\eqref{eq:obj}:} $\mathrm{argmax}_{\rep\p} I(\rep\p \wedge \rvxa) \text{ subject to } I(\rep\p \wedge \rva) = 0$.
    
    We first prove the sufficiency of $\rep^*$, then prove the $\epsilon$-minimality of $\rep^*$.
    
    \bigskip
    \textbf{1) Proof of sufficiency}
    
    Since $I(\rep\p \wedge \rvx\p) = H(\rvx\p) - H(\rvx\p | \rep\p)$, and $H(\rvx\p)$ does not depend on $\rep\p$, we have that the solution to Problem~\eqref{eq:obj}, $\rep^*$, minimizes $H(\rvx\p | \rep\p)$ under constraint $I(\rep\p \wedge \rva) = 0$.
    
    Then, we show that $\rep^*$ also minimizes $H(\rvx|\rep\p)$.
    
    We know that $I(\rvx,\rva \wedge \rep\p | \rvx\p) = 0$ because of the Markovian property. Since $I(\rvx,\rva \wedge \rep\p | \rvx\p) = H(\rvx,\rva|\rvx\p) - H(\rvx,\rva|\rvx\p, \rep\p)$, we have
    \begin{equation}
    \label{eq:hxaxp}
        H(\rvx,\rva|\rvx\p) = H(\rvx,\rva|\rvx\p, \rep\p).
    \end{equation}
    
    Then we can derive
    \begin{align}
        H(\rvx,\rva|\rep\p) - H(\rvx,\rva|\rvx\p) 
        &= H(\rvx,\rva|\rep\p) - H(\rvx,\rva|\rvx\p,\rep\p) \\
        &= I(\rvx,\rva\wedge \rvx\p | \rep\p) \\
        &= H(\rvx\p|\rep\p) - H(\rvx\p|\rep\p,\rvx,\rva) \label{eq:hxzxa} \\
        &= H(\rvx\p|\rep\p) \label{eq:xpzp}
    \end{align}
    Equality~\eqref{eq:hxzxa} holds because $\rvx\p$ comes from a deterministic function of $\rvx$ and $\rva$.
    Since $H(\rvx,\rva|\rvx\p)$ does not depend on $\rep\p$, $\rep^*$ minimizes $H(\rvx,\rva|\rep\p)$ as it minimizes $H(\rvx\p|\rep\p)$.
    
    Also, we known that $I(\rep^*\wedge \rva) = 0$, so we can further obtain
    \begin{align}
        H(\rvx\p|\rep\p) &= 
        H(\rvx,\rva|\rep\p) \\
        &= H(\rvx|\rep\p) + H(\rva|\rep\p) - I(\rvx\wedge\rva|\rep\p) \\
        &= H(\rvx|\rep\p) + H(\rva|\rep\p) - H(\rva|\rep\p) + H(\rva|\rvx,\rep\p) \\
        &= H(\rvx|\rep\p) +  H(\rva|\rvx,\rep\p) \\
        &= H(\rvx|\rep\p) \label{eq:xaz},
    \end{align}
    where Equation~\eqref{eq:xaz} holds because $H(\rva|\rvx,\rep\p)\leq H(\rva|\rep\p)=0$.
    % Also, we known that $I(\rep^*\wedge \rva) = 0$, and given that $\rvx$ and $\rva$ are independent, we get
    % \begin{equation}
    %     H(\rvx\p|\rep\p) = H(\rvx|\rep\p) + H(\rva|\rep\p) = H(\rvx|\rep\p)
    % \end{equation}
    
    Therefore, $\rep^*$ minimizes $H(\rvx|\rep\p)$.
    
    Following the similar procedure as above (Equation~\eqref{eq:hxaxp} to Equation~\eqref{eq:xaz}), we are able to show that
    \begin{equation}
        H(\rvx|\rep\p) = H(\rvy,\nuisance|\rep\p) - H(\rvy,\nuisance|\rvx)
    \end{equation}
    
    So $\rep^*$ also minimizes $H(\rvy,\nuisance|\rep\p)$, which can be further decomposed into $H(\rvy|\rep\p) + H(\nuisance|\rep\p,\rvy)$. 
    Next we show by contradiction that $H(\rvy|\rep^*)$ equals to $H(\rvy|\rvx)$ and thus $I(\rvy\wedge\rep^*) = I(\rvy\wedge\rvx)$.
    
    Define $L(\rep\p):=H(\rvy|\rep\p) + H(\nuisance|\rep\p,\rvy)$
    Assume that the optimizer $\rep^*$ minimizes $L$, but does not satisfy sufficiency, i.e., $H(\rvy|\rep^*) > H(\rvy|\rvx) = 0$. We will then show that one can construct another representation $\hat{\rep}$ such that $L(\hat{\rep})<L(\rep^*)$, conflicting with the assumption that $\rep^*$ minimizes $L$. The construction of $\hat{\rep}$ works as follows. Since the augmented data $\rvx\p$ satisfies $I(\rvx\p\wedge\rvy)=I(\rvx\wedge\rvy)$ (Condition (a) of Theorem~\ref{thm:min_suff_eps}), we have $H(\rvy|\rvx\p)=H(\rvy|\rvx)=0$. Hence, there exists a function $\pi\p$ such that $\pi\p(\rvx\p)=\rvy$. Define $\hat{\rep} := (\rep^*,\pi(\rvx\p))$, then we have
    
    \begin{align}
        L(\hat{\rep}) 
        &= H(\rvy|\hat{\rep}) + H(\nuisance|\hat{\rep},\rvy) \\
        &= H(\rvy|\rep^*,\pi(\rvx\p)) + H(\nuisance|\rep^*,\pi(\rvx\p),\rvy) \\
        &= H(\rvy|\rep^*,\rvy) + H(\nuisance|\rep^*,\rvy,\rvy) \\
        &= 0 + H(\nuisance|\rep^*,\rvy) \\
        &< H(\rvy|\rep^*) + H(\nuisance|\rep^*,\rvy) \\
        &= L(\rep^*)
    \end{align}
    
    Therefore, the constructed $\hat{\rep}$ conflicts with the assumption. We can conclude that any optimizer $\rep^*\in \mathrm{argmin}_{\rep\p} H(\rvy|\rep\p) + H(\nuisance|\rep\p,\rvy)$ has to satisfy $H(\rvy|\rep^*) = H(\rvy|\rvx)$, which is equivalent to $I(\rvy\wedge\rep^*) = I(\rvy\wedge\rvx)$. The sufficiency of $\rep^*$ is thus proven.

    As a result, the maximizer to Problem~\eqref{eq:obj}, $\rep^*$, satisfies 
    $I(\rep^* \wedge \rvy) = I(\rvx\p \wedge \rvy) = I(\rvx \wedge \rvy)$.

    \bigskip
    \textbf{2) Proof of $\epsilon$-Minimality}
    % If the condition $I(\rvx\p\wedge\nuisance)=0$ is replaced by $I(\rvx\p\wedge\nuisance)\leq \epsilon$, we show the $\epsilon$-minimality of $\rvx\p$.
    
    % Next, we prove the $\epsilon$-minimality of $\rep^*$.
    
    % Similarly to Equation~\eqref{eq:xyn}, we can obtain
    Since $\rvx$ is a deterministic function of $\rvy$ and $\nuisance$, we have
    \begin{align}
        I(\rvx\p \wedge \rvx) &= I(\rvx\p \wedge \rvy,\nuisance) \label{eq:xyn_1} \\
        &= I(\rvx\p \wedge \nuisance) + I(\rvx\p \wedge \rvy |\nuisance) \label{eq:xxy_1} \\
        &\leq I(\rvx\p \wedge \rvy |\nuisance) + \epsilon
    \end{align}
    where the equality in Equation~\eqref{eq:xyn_1} holds because $I(\rvx\p \wedge \rvx) \geq I(\rvx\p \wedge \rvy,\nuisance)$ and $I(\rvx\p \wedge \rvx) \leq I(\rvx\p \wedge \rvy,\nuisance)$ both hold.
    
    % Since $\rvx\p$ is sufficient, Equation~\eqref{eq:xy} holds for $\rvx\p$, i.e., $I(\rvx\p \wedge \rvy |\nuisance) = I(\rvx \wedge \rvy)$.
    And we can derive 
    \begin{equation}
    \label{eq:eps_1}
        \begin{aligned}
            I(\rvx\p \wedge \rvy | \nuisance) - I(\rvx \wedge \rvy) 
            &= H(\rvy|\nuisance) - H(\rvy|\rvx\p, \nuisance) - H(\rvy) + H(\rvy|\rvx) \\
            &= H(\rvy|\rvx) - H(\rvy|\rvx\p,\nuisance) \\
            &\leq H(\rvy|\rvx) \\
            &=0
        \end{aligned}
    \end{equation}
    
    Moreover, we know that
    \begin{equation}
        I(\rvy \wedge \nuisance) + I(\rvx\p \wedge \rvy |\nuisance) = I(\rvx\p \wedge \rvy) + I(\rvy \wedge \nuisance |\rvx\p)
    \end{equation}
    
    And we have $I(\rvy \wedge \nuisance) = 0$, so
    \begin{equation}
    \label{eq:eps_2}
        \begin{aligned}
        I(\rvx\p \wedge \rvy |\nuisance) - I(\rvx \wedge \rvy) &= I(\rvx\p \wedge \rvy |\nuisance) - I(\rvx\p \wedge \rvy) \\
        &= I(\rvy \wedge \nuisance |\rvx\p) \geq 0
    \end{aligned}
    \end{equation}
    
    Combining \eqref{eq:eps_1} and \eqref{eq:eps_2}, we have 
    \begin{equation}
    \label{eq:xy}
        I(\rvx\p \wedge \rvy |\nuisance) = I(\rvx \wedge \rvy)
    \end{equation}
    Note that \eqref{eq:xy} holds for all sufficient statistics of $\rvx$ w.r.t. $\rvy$.
    
    Then we first show that $\rvx\p$ is $\epsilon$-minimal of $\rvx$ w.r.t. $\rvy$ by contradiction.
    
    Assume there exists a random variable $\tilde{\rvx}$ satisfying
    $I(\tilde{\rvx} \wedge \rvy) = I(\rvx \wedge \rvy)$,
    such that 
    $I(\tilde{\rvx} \wedge \rvx) < I(\rvx\p \wedge \rvx) - \epsilon$.
    
    Then we have
    
    \begin{align}
        I(\tilde{\rvx} \wedge \rvy | \nuisance) &= I(\tilde{\rvx} \wedge \rvx) - I(\tilde{\rvx} \wedge \nuisance) \label{eq:xn} \\
        &\leq I(\tilde{\rvx} \wedge \rvx) \\
        &< I(\rvx\p \wedge \rvx) - \epsilon \\
        &= I(\rvx\p \wedge \rvy |\nuisance) + \epsilon - \epsilon\\
        &= I(\rvx \wedge \rvy)\\
        &= I(\tilde{\rvx} \wedge \rvy | \nuisance)
    \end{align}
    where \eqref{eq:xn} holds by replacing $\rvx\p$ with $\tilde{\rvx}$ in \eqref{eq:xxy_1}.
    
    Hence, we get $I(\tilde{\rvx} \wedge \rvy | \nuisance) < I(\tilde{\rvx} \wedge \rvy | \nuisance)$ which is impossible. So we have that there does not exist such an $\tilde{\rvx}$, and $\rvx\p$ is $\epsilon$-minimal representation of $\rvx$ w.r.t. $\rvy$.
    
    Then, since we have $I(\rep\p \wedge \nuisance) \leq I(\rvx\p \wedge \nuisance)\leq\epsilon$ (thanks to Data Processing Inequality), $\rep\p$ is also a $\epsilon$-minimal sufficient statistic of $\rvx$ w.r.t. $\rvy$.
    
    % Following the same claims we made for the minimality proof in 2), it is straightforward that $\rep\p$ is also an $\epsilon$-minimal sufficient statistic of $\rvx$ w.r.t. $\rvy$.
    
    \bigskip
    \textbf{II. For Problem~\eqref{eq:obj_y}:} $\mathrm{argmax}_{\rep\p} I(\rep\p \wedge \rvy) \text{ subject to } I(\rep\p \wedge \rva) = 0$. 
    
    Since the objective is to maximize $I(\rep\p \wedge \rvy)$, we only need to show that $\rep^*$ achieves the maximum mutual information with $\rvy$. According to the above proof for Problem~\eqref{eq:obj}, we know that there exist $\rep\p$ such that $I(\rep\p \wedge \rva) = 0$ and $I(\rep\p \wedge \rvy)=I(\rvx \wedge \rvy)$. Hence, the optimizer to Problem~\eqref{eq:obj_y} must satisfy sufficiency.
    
    The proof of $\epsilon$-minimality is identical to the one under Problem~\eqref{eq:obj}.
    
\end{proof}

\subsection{Additional Theoretical Results on Augmentation Properties}
\label{app:proof_add}

% \vspace{0.5em}
% \textbf{Comparison to Prior Augmentation Principles.}

The two conditions in Theorem~\ref{thm:min_suff_eps}, Condition (a) or Condition (b), requires that the augmentation $\rvx\p$ is (a) sufficient and (b) ($\epsilon$)-minimal. 
These two conditions are closely related to some augmentation rationales in prior papers.
For example, Wang et al.~\cite{wang2020understanding} propose a \textbf{symmetric augmentation}, which can result in Condition (a), as formalized in Lemma~\ref{lem:sufficent-augmentation} below.
Furthermore, Tian et al.~\cite{tian2020makes} propose an ``InfoMin'' principle of data augmentation, that minimizes the mutual information between different views (equivalent to $\min I(\rvx, \rvx\p)$). We show by \Cref{lem:invariance-nuisance} that this InfoMin principle leads to the above Condition (b). 
In contrast, our \Cref{thm:min_suff_eps} characterizes two key conditions of augmentation and directly relate them to the optimality of the learned representation.
% \fhc{Todo: this paragraph needs a bit of rewording.}

\begin{lemma}[Sufficiency of Augmentation]
\label{lem:sufficent-augmentation}
Suppose the original and augmented observations $\rv{X}$ and $\rv{X}^{\prime}$ satisfy the following properties:
\begin{subequations}
\begin{gather}
\label{eq:sufficient-augmentation}
P(\rv{X} = u, \rv{X}^{\prime} = v \vert \rv{Y} = y) = P(\rv{X} = v, \rv{X}^{\prime} = u \vert \rv{Y} = y), ~\forall u, v, y \\
P(\rv{X} = u \vert \rv{Y} = y) = P(\rv{X}^{\prime} = u \vert \rv{Y} = y)
\end{gather}
\end{subequations}
Then the augmented observation $\rv{X}^{\prime}$ is sufficient for the label $\rv{Y}$, i.e., $I(\rv{X}^{\prime} \wedge \rv{Y}) = I(\rv{X} \wedge \rv{Y})$.
\end{lemma}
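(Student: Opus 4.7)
The plan is short because this lemma reduces to a single observation about the joint distributions. The second hypothesis states that $\rvx$ and $\rvxa$ share the same conditional distribution given $\rvy$, i.e., $P(\rvx = u \mid \rvy = y) = P(\rvxa = u \mid \rvy = y)$ for all $u, y$. Multiplying both sides by $P(\rvy = y)$ immediately yields $P_{\rvx,\rvy} = P_{\rvxa,\rvy}$, and then marginalizing out $\rvy$ gives $P_{\rvx} = P_{\rvxa}$. Since mutual information is a functional of the joint distribution alone, these two equalities force $I(\rvx \wedge \rvy) = I(\rvxa \wedge \rvy)$.

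Concretely, I would structure the proof in three short steps. (i) From $P(\rvx = u \mid \rvy = y) = P(\rvxa = u \mid \rvy = y)$, conclude $P(\rvx = u, \rvy = y) = P(\rvxa = u, \rvy = y)$ for every $(u,y)$. (ii) Summing over $y$ gives $P(\rvx = u) = P(\rvxa = u)$. (iii) Expand both mutual informations as
\begin{equation*}
I(\rvx \wedge \rvy) = \sum_{u,y} P(\rvx = u, \rvy = y) \log \frac{P(\rvx = u, \rvy = y)}{P(\rvx = u)\,P(\rvy = y)},
\end{equation*}
and similarly for $\rvxa$; the two sums are then identical term by term.

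I would also remark that the first (symmetric augmentation) hypothesis is actually stronger than needed for this conclusion: marginalizing the identity $P(\rvx=u,\rvxa=v\mid\rvy=y) = P(\rvx=v,\rvxa=u\mid\rvy=y)$ over $v$ already recovers the second hypothesis. So the symmetry assumption is a natural sufficient condition one might verify in practice (e.g., when $\rvxa$ is obtained from $\rvx$ by a stochastic augmentation whose kernel is symmetric in-class), while the class-conditional equality is what actually drives the conclusion.

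There is essentially no obstacle: the argument uses only the definition of mutual information and elementary manipulations of conditional probabilities, and $\rvy$ being discrete means no measurability subtleties arise. The one thing to be careful about is simply stating the chain of equalities cleanly and noting which hypothesis is invoked at each step.
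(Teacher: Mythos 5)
Your proof is correct and follows essentially the same route as the paper's: both arguments reduce the claim to the class-conditional equality $P(\rvx = u \mid \rvy = y) = P(\rvxa = u \mid \rvy = y)$ and then observe that mutual information depends only on the joint distribution, so $I(\rvxa \wedge \rvy) = I(\rvx \wedge \rvy)$ term by term. Your added remark that the symmetry hypothesis already implies the class-conditional equality (by marginalizing over $v$) is accurate and in fact clarifies the paper's slightly loose attribution of the key step to ``the property of symmetric augmentation.''
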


\begin{proof}[\textbf{Proof of Lemma~\ref{lem:sufficent-augmentation}}]
\begin{align}
I(\rv{X}^{\prime} \wedge \rv{Y}) 
& = \sum_{x, y} P(\rv{X}^{\prime} = x, \rv{Y} = y) \log\frac{P(\rv{X}^{\prime}) = x, \rv{Y} = y)}{P(\rv{X}^{\prime} = x) P(\rv{Y} = y)} \\
& = \sum_{x, y} P(\rv{X}^{\prime} = x \vert \rv{Y} = y) P(\rv{Y} = y) \log\frac{P(\rv{X}^{\prime} = x \vert \rv{Y} = y)}{\sum_{\bar{y}} P(\rv{X}^{\prime} = x \vert \rv{Y} = \bar{y}) P(\rv{Y} = \bar{y})} \\
& = \sum_{x, y} P(\rv{X} = x \vert \rv{Y} = y) P(\rv{Y} = y) \log\frac{P(\rv{X} = x \vert \rv{Y} = y)}{\sum_{\bar{y}} P(\rv{X} = x \vert \rv{Y} = \bar{y}) P(\rv{Y} = \bar{y})} \\
& = \sum_{x, y} P(\rv{X} = x, \rv{Y} = y) \log\frac{P(\rv{X}) = x, \rv{Y} = y)}{P(\rv{X} = x) P(\rv{Y} = y)} \\
& = I(\rv{X} \wedge \rv{Y})
\end{align}
where the third equation utilizes the property of symmetric augmentation.
\end{proof}

\begin{lemma}[Maximal Insensitivity to Nuisance]
\label{lem:invariance-nuisance}
If Assumption~\ref{assump:determinstic} holds, i.e., $H(\rv{Y} \vert \rv{X}) = 0$, the mutual information $I(\rv{X}^{\prime} \wedge \rv{X})$ can be decomposed as
\begin{equation}
\label{eq:invariance-nuisance}
I(\rv{X}^{\prime} \wedge \rv{X}) = I(\rv{X}^{\prime} \wedge \rv{N}) +
I(\rv{X}^{\prime} \wedge \rv{Y})
\end{equation}
Since $\rv{X}^{\prime}$ is sufficient, i.e., $I(\rv{X}^{\prime} \wedge \rv{X}) = I(\rv{X} \wedge \rv{Y})$ is a constant, minimizing $I(\rv{X}^{\prime} \wedge \rv{X})$ is equivalent to minimizing $I(\rv{X}^{\prime} \wedge \rv{N})$.
\end{lemma}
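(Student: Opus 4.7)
\textbf{Proof proposal for Lemma~\ref{lem:invariance-nuisance}.}

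The plan is to expand $I(\rvx\p \wedge \rvx)$ using the chain rule of mutual information, exploiting both the task--nuisance decomposition $\rvx = d(\rvy, \nuisance)$ with $\rvy \indep \nuisance$ and the deterministic label map $H(\rvy \mid \rvx) = 0$.

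First, I would establish the auxiliary identity $I(\rvx\p \wedge \rvx) = I(\rvx\p \wedge \rvy, \nuisance)$. The $\leq$ direction is the data processing inequality along $(\rvy, \nuisance) \to \rvx \to \rvx\p$, since $\rvx$ is a deterministic function of $(\rvy, \nuisance)$. The $\geq$ direction combines $H(\rvx \mid \rvy, \nuisance) = 0$ with the Markov property $\rvx\p \indep (\rvy, \nuisance) \mid \rvx$ of the augmentation pipeline: expanding $I(\rvx\p \wedge \rvx, \rvy, \nuisance)$ in two ways makes both $I(\rvx\p \wedge \rvy, \nuisance \mid \rvx)$ and $I(\rvx\p \wedge \rvx \mid \rvy, \nuisance)$ vanish, yielding the equality. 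This is essentially the device already used in the Theorem~\ref{thm:min_suff_eps} proof around equations~\eqref{eq:hxaxp}--\eqref{eq:xpzp}.

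Next, I would apply the chain rule to $I(\rvx\p \wedge \rvy, \nuisance)$ in two different orders,
\begin{equation*}
I(\rvx\p \wedge \rvy, \nuisance) = I(\rvx\p \wedge \nuisance) + I(\rvx\p \wedge \rvy \mid \nuisance) = I(\rvx\p \wedge \rvy) + I(\rvx\p \wedge \nuisance \mid \rvy),
\end{equation*}
and subtract to obtain the interaction-information identity $I(\rvx\p \wedge \rvy \mid \nuisance) - I(\rvx\p \wedge \rvy) = I(\rvy \wedge \nuisance \mid \rvx\p)$, where the unconditional $I(\rvy \wedge \nuisance)$ is zero by the nuisance property. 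So the target decomposition reduces to establishing the conditional independence $\rvy \indep \nuisance \mid \rvx\p$.

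The main obstacle is exactly this last step: $H(\rvy \mid \rvx) = 0$ alone does not suffice, as the XOR-type augmentation $\rvx\p = \rvy \oplus \nuisance$ with uniform independent $\rvy, \nuisance$ satisfies all the stated hypotheses yet makes $\rvy$ and $\nuisance$ maximally dependent given $\rvx\p$. I would close the gap by invoking the label-preserving sufficiency of $\rvx\p$ (the condition $I(\rvx\p \wedge \rvy) = I(\rvx \wedge \rvy)$ that the lemma itself relies on in its final remark), which together with $H(\rvy \mid \rvx) = 0$ forces $H(\rvy \mid \rvx\p) = 0$. Then $\rvy$ is a deterministic function of $\rvx\p$, so $H(\rvy \mid \rvx\p, \nuisance) = 0$, giving $I(\rvy \wedge \nuisance \mid \rvx\p) = 0$ and delivering the decomposition. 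The concluding reduction of ``$\min I(\rvx\p \wedge \rvx)$'' to ``$\min I(\rvx\p \wedge \nuisance)$'' is then immediate, since $I(\rvx\p \wedge \rvy) = I(\rvx \wedge \rvy)$ is a data-dependent constant.
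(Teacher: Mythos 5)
Your proof is correct, and it is worth noting that the paper itself gives no self-contained proof of Lemma~\ref{lem:invariance-nuisance}: it only defers to Proposition~3.1 of Achille and Soatto. The argument you construct is, however, essentially the machinery the paper already deploys inside the minimality part of the proof of Theorem~\ref{thm:min_suff_eps}. Your identity $I(\rvx\p \wedge \rvx) = I(\rvx\p \wedge \rvy,\nuisance) = I(\rvx\p \wedge \nuisance) + I(\rvx\p \wedge \rvy |\nuisance)$ is Equations~\eqref{eq:xyn_1}--\eqref{eq:xxy_1}, and your interaction-information step combined with $I(\rvy\wedge\nuisance)=0$ is Equation~\eqref{eq:eps_2}. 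The only divergence is in how the loop is closed: the paper sandwiches $I(\rvx\p \wedge \rvy|\nuisance) - I(\rvx\wedge\rvy)$ between $\leq 0$ (via \eqref{eq:eps_1}) and $\geq 0$ (via \eqref{eq:eps_2}) to reach \eqref{eq:xy}, whereas you deduce $H(\rvy|\rvx\p)=0$ from sufficiency plus Assumption~\ref{assump:determinstic} and conclude $I(\rvy\wedge\nuisance|\rvx\p)=0$ directly. The two closings are equivalent one-liners resting on the same two inputs, so your route coincides with the paper's internal derivation rather than offering a genuinely different one.

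Your most valuable contribution is the observation that the decomposition \eqref{eq:invariance-nuisance} does not follow from Assumption~\ref{assump:determinstic} alone. Your XOR construction is a genuine counterexample: take $\rvy$ and $\nuisance$ independent uniform bits, $\rvx=(\rvy,\nuisance)$, and $\rvx\p=\rvy\oplus\nuisance$; then $H(\rvy|\rvx)=0$ holds, yet $I(\rvx\p\wedge\rvx)=1$ bit while $I(\rvx\p\wedge\nuisance)=I(\rvx\p\wedge\rvy)=0$. As written, the lemma invokes sufficiency of $\rvx\p$ only in its second sentence, but your argument shows it is already needed for the first; Condition~(a) of Theorem~\ref{thm:min_suff_eps} should be stated as an explicit hypothesis of the decomposition itself. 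This is a legitimate sharpening of the statement, not a flaw in your proof.
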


Lemma~\ref{lem:invariance-nuisance} can be obtained by a simple adaptation from Proposition 3.1 by Achille and Soatto~\cite{achille2018emergence}.
\section{Experiments}
\label{app:exp}
\subsection{Implementation Details}
All codes are implemented with Pytorch\footnote{https://pytorch.org/}. To train the neural net with LP-A3 augmentation, we apply seperate batch norm layer~(BN), i.e., agumented data and normal data use different BN, which is a common strategy used by previous adversarial augmentations~\cite{ho2020contrastive,xie2020adversarial,yang2022identity}. The only hyperparameters for LP-A3 are label preserving margin $\sigma$ and data selection ratio $\tau$, which are tuned for each task according to the results in  Sec.\ref{sec:sensitivity}. 

\textbf{Semi-supervised learning}\quad 
We reproduce Fixmatch~\cite{sohn2020fixmatch} based on public code\footnote{https://github.com/kekmodel/FixMatch-pytorch} and apply LP-A3 to it. Following~\cite{sohn2020fixmatch}, we used a Wide-ResNet-28-2 with 1.5M parameters for CIFAR10, WRN-28-8 for CIFAR100, and WRN-37-2 for STL-10. All the models are trained for $2^{18}$ iterations. $\sigma$ is set to 0.002 for CIFAR10 and STL-10, and 0.02 for CIFAR100 and $\tau$ is set to be 90. Since FixMatch only apply data augmentation to those unlabeled data, here LP-A3 is also applied to those unlabeled data as data augmentation. For unlabeled data, label $\rvy$ used in LP-A3 is the pesudo label generated by FixMatch algorithm.

\textbf{Noisy-label learning}\quad 
We reproduce DivideMix~\cite{Li2020DivideMix} and PES~\cite{bai2021understanding} based on their official code\footnote{https://github.com/LiJunnan1992/DivideMix, https://github.com/tmllab/PES} and apply LP-A3 to them as data augmentation. Following~\cite{Li2020DivideMix,bai2021understanding}, we used a ResNet-18 for CIFAR10 and CIFAR100. All the models are trained for 300 epochs. $\sigma$ is set to 0.002 for CIFAR10 and 0.02 for CIFAR100, and $\tau$ is set to be 90. All the noise are symmetric noise. For noisy labeled data, label $\rvy$ used in LP-A3 is the pesudo label generated by DivideMix or PES algorithm respectively.

\textbf{Medical Image Classification}\quad 
Here we follow the original training and evaluation protocol of MedMNIST~\footnote{https://github.com/MedMNIST/experiments} and apply LP-A3 to the training procedure as data augmentation. ResNet-18 and ResNet-50 are trained for 100 epochs with cross-entropy loss on all the multi-class classfication subset of MedMNIST. $\sigma$ is set to 0.02 and $\tau$ is tuned from $\{20, 50, 90\}$ for each dataset. The hyperparameters of RandAugment~\cite{cubuk2020randaugment} is set to $N=3, M=5$ by following their original paper.

\textbf{Sensitivity Analysis of Hyperparameters}\quad In Figure.~\ref{fig:ablation}, the experiments for semi-supervised learning are conducted on CIFAR100 with 2500 labeled data, the experiments for noisy-label learning are conducted on CIFAR100 with 80\% noisy label, and the experiments for medical image classification are conducted on DermaMNIST with ResNet50.

\begin{figure}[t]
\centering
\includegraphics[scale=0.58]{./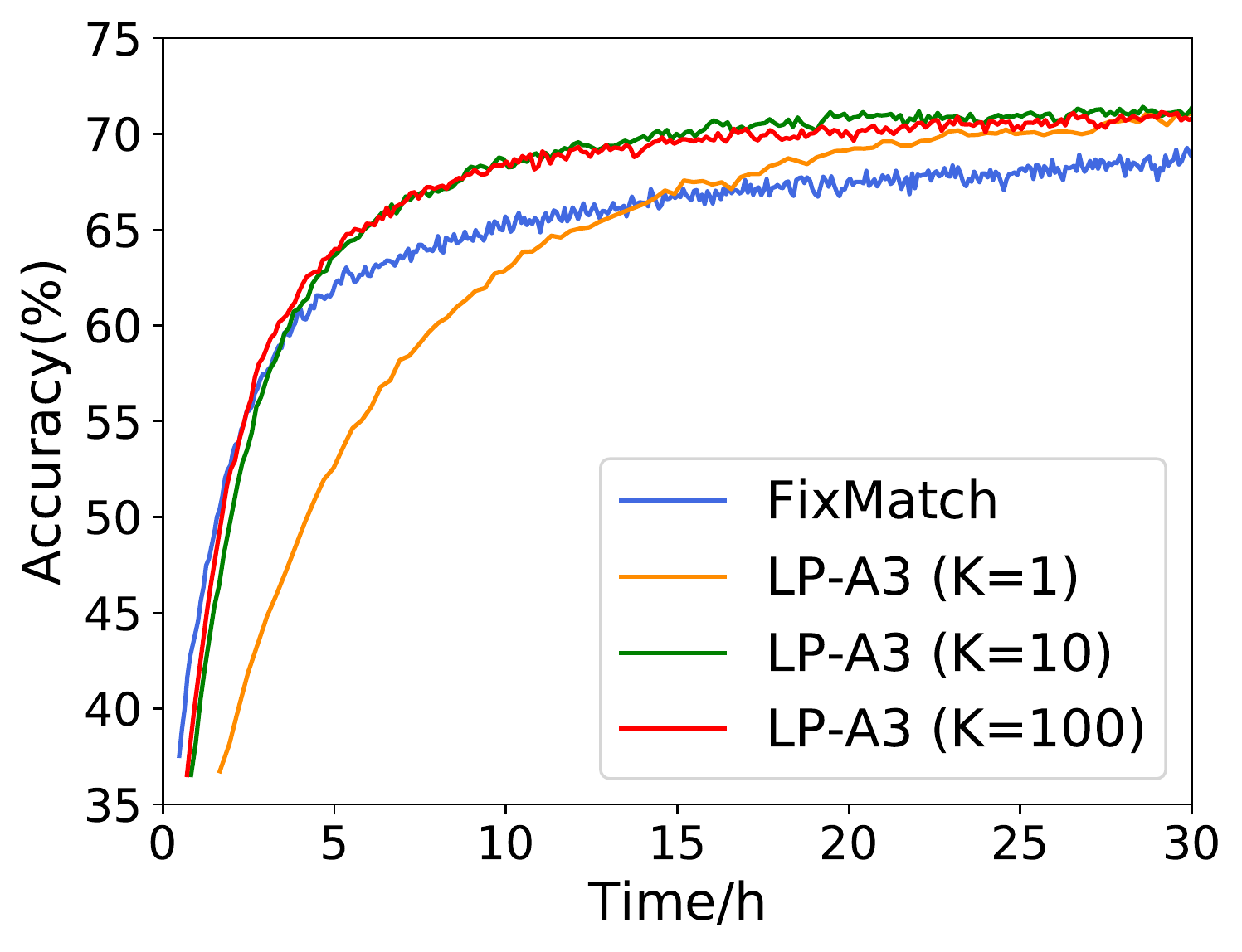}
\vspace{-0.5em}
\caption{{\bf Walk-clock time comparison} on CIFAR100 with 2500 labeled data.}
\label{tab:walk_clock}
\vspace{-1.em}
\end{figure}

\begin{figure}[t]
\centering
    \begin{subfigure}[b]{0.42\textwidth}
         \centering
\includegraphics[width=2.75in]{./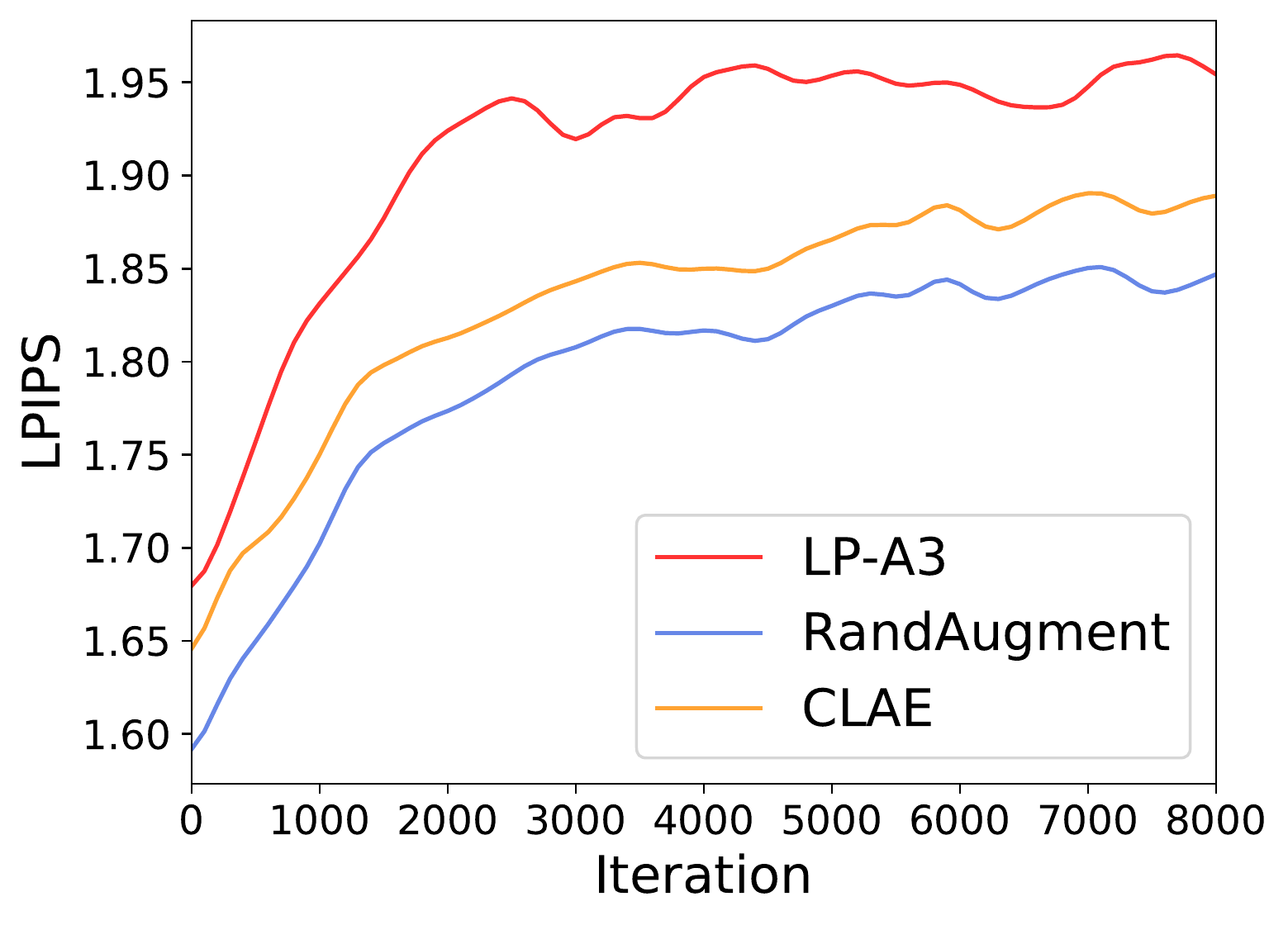}
         \caption{$-I(\rvxa \wedge \rvx)$ meadured by LPIPS distance}
     \end{subfigure}
     \hfill
     \begin{subfigure}[b]{0.52\textwidth}
         \centering
\includegraphics[width=2.75in]{./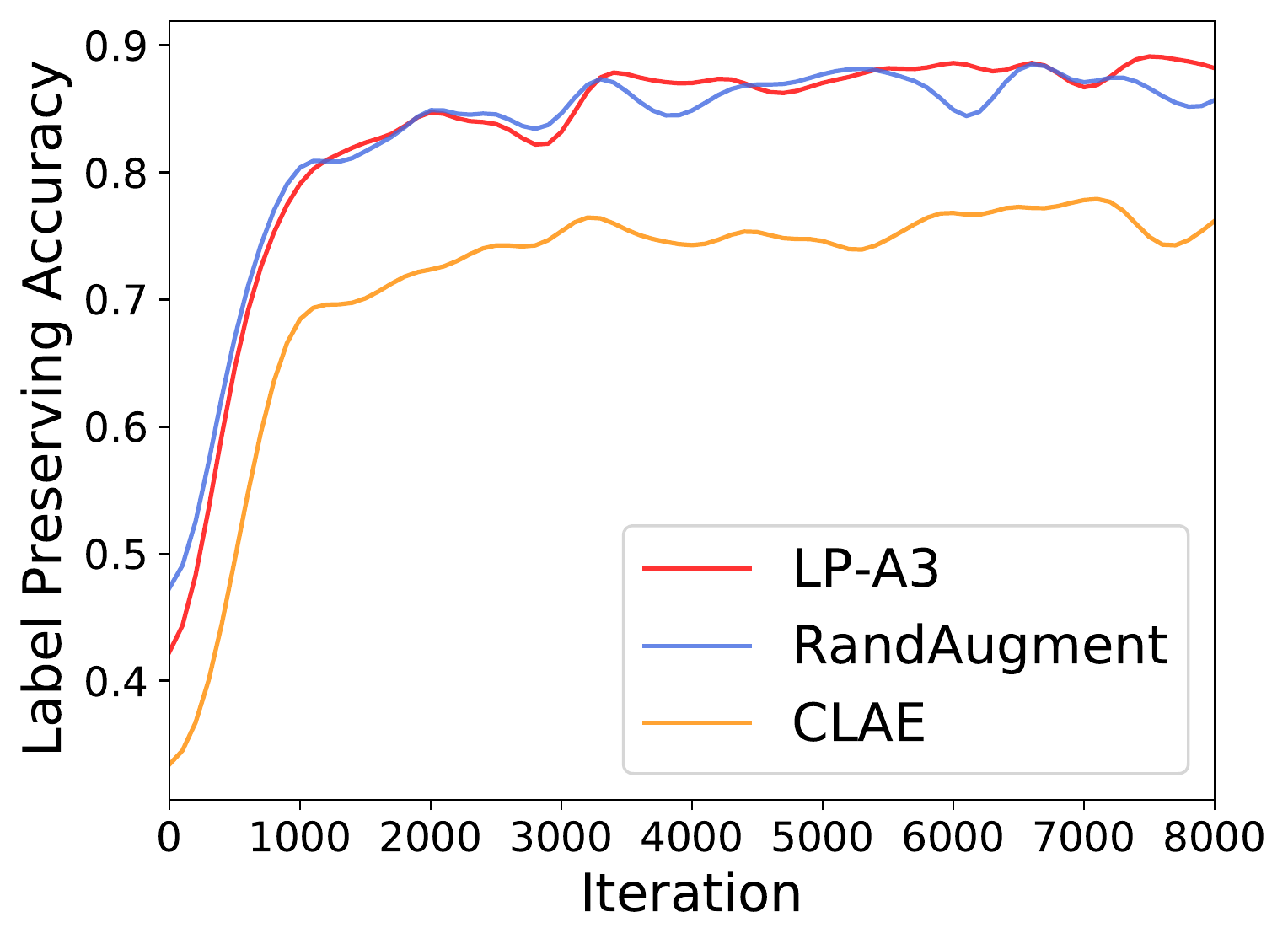}
         \caption{$I(\rvxa \wedge \rvy)$ measured by classification accuracy on $x'$}
     \end{subfigure}
  \caption{{\bf Mutual information terms} of LP-A3, RandAugment and CLAE during training.}
  \label{tab:mi}
\vspace{-1em} 
\end{figure}

\subsection{Computational Cost }
Although Algorithm~\ref{alg:Framwork} is a pretty fast algorithm to solve  Lagragian multiplier function, it still requires several gradient descent steps which is computationally expensive. One way to reduce computational cost is to generate LP-A3 for every few epochs. To be specific, once LP-A3 is generated, it will be saved and used to train the network for the next $K$ epochs. When $K=1$, it degenerates to the original Algorithm~\ref{alg:plug_in}. The walk-clock time comparison on CIFAR100 with 2500 labeled data is give in Fig.~\ref{tab:walk_clock}, where we can see that LP-A3 ($K=10$) and LP-A3 ($K=100$) achieves much better accuracy than baseline within the same training time. Moreover,  LP-A3 ($K=10$) and LP-A3 ($K=100$) achieves comparable accuracy as the original LP-A3 ($K=1$) after convergence, indicating that LP-A3 is quite informative and it takes several epochs for the neural net to learn from it.

\subsection{Mutual Information Terms of Different Data Augmentation}

In order to further analyze the properties of different data augmentations, here we report the value of mutual information terms $I(\rvxa \wedge \rvx)$ and $I(\rvxa \wedge \rvy)$ of training data generated by LP-A3, RandAugment and CLAE~\cite{ho2020contrastive} (an adversarial augmentation method)  during the training procedure. The results on CIFAR10 are given in Table.~\ref{tab:mi}, where $-I(\rvxa \wedge \rvx)$ is measured by the LPIPS distance between $x$ and $x'$ and $I(\rvxa \wedge \rvy)$ is measured by the label preserving accuracy, i.e., the classification accuracy of the current model on $x'$. We can clearly see that LP-A3 is the most different from the original data (largest LPIPS distance) and at the same time preserves the label well. Although RandAugment can also preserve label, as a pre-defined augmentation, it is the closest to the original data. Another adaptive augmentation CLAE has larger LPIPS ditance than RandAugment but cannot preserve the label well, achieving the lowest label preserving accuracy.

\subsection{ImageNet Experiments}
In order to validate the performance of LP-A3 on large-scale datasets, we compare FixMatch and "FixMatch+LP-A3" on ImageNet the semi-supervised learning. ResNet-50 is used as the backbone network. We randomly choose 10\% data and set them as labeled data, while the remaining 90\% are unlabeled data. The batch size for labeled (unlabeled) data is 64 (320). The results are reported in the Table~\ref{tab:imagenet}, in which LP-A3 can improve FixMatch by a large margin, especially during the early stage (>6$\%$ at 100,000 iterations).

\begin{table}[t]
\footnotesize
	\centering
     \caption{Semi-supervised Learning performance on ImageNet. We eualate the performance of FixMatch and ``FixMatch+LP-A3'' at different training stages. Iter denotes training iterations. }
     \vskip 0.1in
	\begin{tabular}{lcccc}
    \toprule
    &100,000 Iter &250,000 Iter & 400,000 Iter \\
    \midrule
    FixMatch&44.45&58.79&62.87\\
    FixMatch+LP-A3&51.22&60.15&63.67\\
	\bottomrule
    \end{tabular}
    \label{tab:imagenet}
\end{table}

\subsection{Comparison with Mixup and Adversarial AutoAugment}
In this section, we compare LP-A3 with state-of-the-art data augmentation on the full dataset in a supervised manner. We select Mixup~\cite{zhang2017mixup} and Adversarial AutoAugment~\cite{zhang2019adversarial} as the representative of sample-mixing based augmentaiton and automated data augmentation methods respectively, and we empirically compare LP-A3 with them by applying LP-A3 on top of them. The backbone model is WideResNet-28-10 and all the model are trained for 150 epochs. The results are given in Table~\ref{tab:cifar10}. It shows that Mixup and Adversarial Autoaugment achieves very high accuracy (>96\%) on CIFAR10 as they are designed for this task, but LP-A3 can further improve them, which indicates that LP-A3 is complementaray to these previous data augmentation using domain knowledge. Note in early stage the advantage of LP-A3 over Adversarial Autoaugment is especially significant ($>0.7\%$).

\begin{table}[t]
\footnotesize
	\centering
     \caption{Comparison with Mixup and Adversarial AutoAugment on the full dataset of CIFAR10.}
     \vskip 0.1in
	\begin{tabular}{lcccc}
    \toprule
    CIFAR10&50 epochs&100 epochs&150 epochs\\
    \midrule
    Mixup&87.39&93.74&96.05\\
    Mixup+LPA3&86.82&94.73&96.12\\
    Adversarial autoaugment&89.70&95.05&97.09\\
    Adversarial autoaugment+LPA3&90.45&95.33&97.20\\
	\bottomrule
    \end{tabular}
    \label{tab:cifar10}
\end{table}

\end{document}